\documentclass[
aps,
prx,
reprint,
superscriptaddress,
longbibliography,
]{revtex4-2}

\usepackage{amsmath,amssymb,amsfonts,amsthm}
\usepackage{graphicx}
\usepackage{textcomp}
\usepackage[dvipsnames]{xcolor}
\usepackage{siunitx}
\usepackage{amsmath,amsfonts,bm}

\def\eqref#1{equation~\ref{#1}}
\def\1{\bm{1}}

\DeclareMathAlphabet{\mathsfit}{\encodingdefault}{\sfdefault}{m}{sl}
\SetMathAlphabet{\mathsfit}{bold}{\encodingdefault}{\sfdefault}{bx}{n}

\usepackage{comment}
\usepackage{wrapfig}
\usepackage{url}
\usepackage{booktabs}
\usepackage{multirow}
\usepackage{tabularx}
\usepackage{minted}
\usepackage{cancel}
\usepackage{algorithm}
\usepackage{algpseudocode}
\usepackage[most]{tcolorbox}

\usepackage{hyperref}
\hypersetup{
  colorlinks = true,
  urlcolor   = Blue,
  linkcolor  = Blue,
  citecolor  = Blue
}

\definecolor{mintgreen}{HTML}{ABE4D7}
\definecolor{tealgreen}{HTML}{D2E8E4}

\newtheorem{theorem}{Theorem}[section]

\newtheorem{proposition}[theorem]{Proposition}

\newtheorem{definition}[theorem]{Definition}

\newtheorem{assumption}[theorem]{Assumption}

\begin{document}

\title{Toward Generative Quantum Utility via Correlation–Complexity Map}

\author{Chen-Yu Liu}
\email{chen-yu.liu@quantinuum.com}
\affiliation{Quantinuum, Partnership House, London, UK}

\author{Leonardo Placidi}
\email{leonardo.placidi@quantinuum.com}
\affiliation{Quantinuum, Otemachi Financial City Grand Cube, Tokyo, Japan}

\author{Eric Brunner}
\email{eric.brunner@quantinuum.com}
\affiliation{Quantinuum, Partnership House, London, UK}

\author{Enrico Rinaldi}
\email{enrico.rinaldi@quantinuum.com}
\affiliation{Quantinuum, Partnership House, London, UK}

\newcommand{\fix}{\marginpar{FIX}}
\newcommand{\new}{\marginpar{NEW}}

\begin{abstract}
We study a practical question in generative quantum machine learning: given a classical dataset, can we determine, before training, whether it is well suited to a quantum generative model? We focus on a class of quantum circuits known as instantaneous quantum polynomial-time (IQP) circuits, whose output distributions are widely believed to be difficult to sample from using classical methods. These circuits are used to build our quantum generative models. We introduce a Correlation–Complexity Map, a simple diagnostic built from two quantities computed from data samples. The first measures how closely the dataset’s spectral correlation patterns resemble those naturally produced by IQP circuits, while the second quantifies how much of the dataset’s structural correlation cannot be captured by simple pairwise models. 
In other words, we can estimate beforehand how well a dataset can be approximated by our model family and also how complex its correlations are, indicating possible failures of classical models.
Applying this framework, we identify turbulence data as a promising target for quantum generative modeling. Guided by this analysis, we use a latent-parameter adaptation scheme that reuses a compact IQP circuit over a temporal sequence by learning and interpolating a low-dimensional latent trajectory, and observe competitive performance against classical baselines in a low-data, low-parameter regime. These results suggest that dataset-level diagnostics can help prioritize problems where quantum generative models are most likely to be useful, with improvements in data and parameter efficiency.
\end{abstract}

\maketitle

\section{Introduction}
Quantum computing \citep{nielsen2010quantum} promises a new computational paradigm, with the potential to represent probability distributions that are difficult to reproduce using classical methods \citep{gharibyan2025heuristic, arute2019quantum}. In parallel, quantum machine learning (QML) \citep{schuld2019quantum, benedetti2019parameterized, mari2020transfer, liu2025quantum, huang2021power, belis2026spectral} has emerged as a framework for leveraging quantum effects to construct models for data analysis and generation. Among the various QML approaches \citep{dong2008quantum, chen2020variational, liu2024training, lin2024quantum, hanruiwang2022quantumnas, du2021grover}, generative modeling \citep{zhang2024generative, lloyd2018quantum, huang2025generative, recio2025train} is particularly promising, since sampling is a native operation of quantum devices and may offer practical utility.

A widely studied class of quantum generative models is based on instantaneous quantum polynomial-time (IQP) circuits, whose output distributions are believed to be hard to sample from classically. Indeed, efficient classical sampling from IQP circuits under standard assumptions would imply a collapse of the polynomial hierarchy (PH) \citep{bremner2011classical}, providing a strong complexity-theoretic argument that IQP sampling is classically intractable at scale. This perspective has motivated efforts to identify settings in which IQP-based models may offer a generative quantum advantage \citep{huang2025generative, recio2025train, ballo2026shallow}.

A central obstacle, however, is their trainability at scale. Recent work proposed a scalable workflow, ``train on classical, deploy on quantum'', that leverages properties of IQP models to perform optimization on classical hardware while reserving quantum hardware for the sampling stage \citep{recio2025train}. 
In this approach, training objectives can be computed using classically tractable quantities (e.g., expectation values of commuting observables), avoiding the cost and instability of hardware-in-the-loop training while preserving the possibility that classical simulation of the sampling process is hard as system size increases. Related approaches based on local inversion and sewing techniques have also been proposed to improve scalability, although they impose additional structural constraints on the circuit \citep{huang2025generative}.

Despite this progress, two open directions remain particularly pressing for useful generative quantum advantage: (i) moving beyond purely bitstring-native datasets toward regimes that naturally support floating-point and integer-valued data, and (ii) improving the search for practical data distributions whose structure matches the inductive biases of classically hard quantum models. These issues shift the emphasis from proving worst-case hardness to diagnosing when a given real-world dataset is structurally compatible with an IQP-type generative model, and doing so in a way that can guide dataset selection and model design, {before investing significant research resources \citep{huang2021power}.}
The search for \emph{problem instances} where quantum algorithms can provide advantage was recently studied and formalized in~\cite{buhrman2025formal}.

In this work, we propose a Correlation–Complexity Map designed to operationalize this search. The map is defined by two complementary dataset indicators. First, we introduce a spectral measure, the Quantum Correlation–Likeness Indicator (QCLI) $I_\text{QCLI}$, that quantifies how the dataset’s correlation-order (Walsh–Fourier) spectrum deviates from an i.i.d. classical binomial baseline via a Jensen–Shannon divergence.
Second, we introduce a structural measure, the classical Correlation–Complexity Indicator (CCI) $I_\text{CCI}$, which measures the fraction of total correlation not captured by an optimal tree-structured model, computed via the Chow–Liu tree approximation. Together, the pair $(I_\text{QCLI}, I_\text{CCI})$ yields a two-dimensional landscape that separates datasets that are classically local/pairwise-expressible from those dominated by genuinely high-order dependencies, while simultaneously measuring whether their correlation signatures are compatible with IQP-type interference structure.  

We then apply these indicators across a diverse suite of datasets to instantiate the proposed Correlation–Complexity Map and empirically delineate an IQP-compatible regime (\text{high} $I_\text{QCLI}$, high $I_\text{CCI}$).  
Strikingly, we find that classical turbulence datasets occupy this regime, exhibiting both relatively strong beyond-pairwise dependence and correlation-order spectra that deviate markedly from classical binomial randomness.  

To bridge the gap emphasized in prior discussions of practical generative quantum utility \citep{huang2025generative}, we further use a float-to-bitstring representation that converts continuous turbulence fields into fixed-length binary strings via quantization and integer-to-binary encoding, enabling IQP-based modeling while allowing deterministic decoding back to continuous coordinates.

{To make this framework concrete, we focus on a Turbulence dataset that is \emph{a priori} expected to exhibit strong multiscale, long-range dependencies, and leads to broad applications such as fluid dynamics \citep{khojasteh2022lagrangian, gourianov2022quantum, pisoni2025compression}.}
We validate the map’s predictive value by training IQP-based generative models on turbulence using the ``train on classical, deploy on quantum'' workflow, and we introduce a novel latent-parameter adaptation mechanism that reuses a compact IQP circuit across a temporal sequence of turbulence snapshots by learning a low-dimensional trajectory in parameter space and interpolating to generate unseen time steps. In comparative evaluations, the IQP approach demonstrates strong sample and parameter efficiency, e.g., learning from 11 turbulence snapshots with a small latent {set of IQP parameters}, while remaining competitive with (and in certain metrics matching or surpassing) substantially more data-hungry classical baselines. Importantly, these results are positioned as a step toward utility rather than a definitive end-to-end quantum advantage claim: \emph{identifying real datasets whose structure aligns with the inductive bias of IQP circuits is a practical prerequisite for meaningful generative quantum advantage when scaled quantum sampling becomes available.}

Our main contributions are: (i) two complementary, computable indicators ($I_\text{QCLI}$ and $I_\text{CCI}$) for characterizing dataset ``IQP-compatibility'' and ``beyond-pairwise classical dependence'', (ii) a correlation–complexity map that guides the search for IQP-compatible real-world data, and (iii) an empirical case study through a novel latent adaptation method on the turbulence dataset, demonstrating that high-($I_\text{QCLI}$, $I_\text{CCI}$) datasets can exhibit behavior consistent with inductive-bias matching for IQP-type generators. An overview of this work can be found in Fig.~\ref{fig:scheme}.

\begin{figure*}[ht]
\centering
\includegraphics[width=0.8\linewidth]{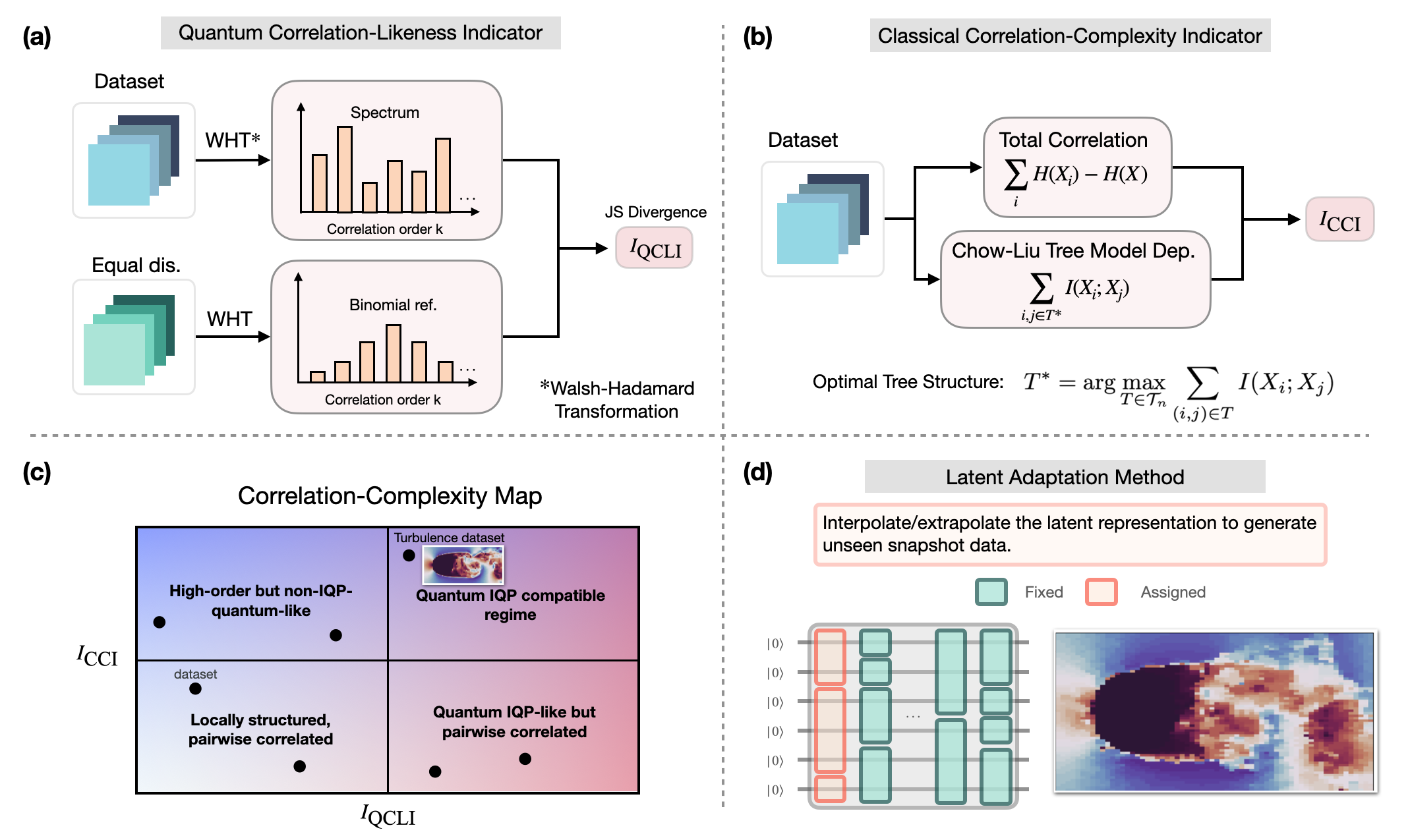}
\caption{Overview of the proposed correlation indicators and the resulting correlation--complexity map.
(a) \textbf{Quantum Correlation--Likeness Indicator (QCLI).} Given a dataset of $n$-bit samples, we compute its correlation-order spectrum by aggregating Walsh--Hadamard (Fourier--Walsh) power over subset sizes $k=|s|$. QCLI, $I_{\text{QCLI}}$, is defined as the Jensen--Shannon divergence between this empirical order spectrum and the binomial reference spectrum induced by the uniform (classically random) distribution.
(b) \textbf{Classical Correlation--Complexity Indicator (CCI).} For the same dataset, we compute the total correlation (multi-information) and the dependence captured by the optimal Chow--Liu tree, obtained by maximizing $\sum_{(i,j)\in T} I(X_i;X_j)$ over spanning trees $T$. CCI, $I_{\text{CCI}}$, is the fraction of total correlation not explained by the optimal tree-structured model.
(c) \textbf{Correlation--Complexity Map.} Datasets are shown in the $(I_{\text{QCLI}}, I_{\text{CCI}})$ plane, separating regimes that are locally structured and classically expressible (low $I_{\text{CCI}}$), quantum-like but classically easy, high-order but non-IQP-aligned, and the high-$I_{\text{QCLI}}$/high-$I_{\text{CCI}}$ regime that is most compatible with IQP-type generative inductive biases. The turbulence dataset is highlighted as residing in this IQP-compatible quadrant, motivating the IQP-based generative modeling experiments. (d) \textbf{Latent adaptation for temporal turbulence synthesis.} Starting from a trained IQP generator, we fix a shared core parameter block and assign (adapt) a low-dimensional latent block to represent different time snapshots. Interpolating or extrapolating the learned latent trajectory enables generation of unseen turbulence snapshots while keeping the circuit architecture and core parameters fixed.}
 \label{fig:scheme}
\end{figure*}

\section{Quantifying IQP compatibility and beyond-pairwise dependence}
In quantum computing and QML, practical quantum advantage depends not only on complexity-theoretic evidence that the underlying problem is asymptotically hard for classical algorithms, but also on whether the practically relevant instances have structure that the quantum method can exploit \citep{buhrman2025formal}.
Prior work has proposed a range of measures and proxy quantities for quantum circuits and states, such as circuit depth, gate count, entanglement, or preparation complexity, to estimate how difficult the corresponding circuits or output distributions are to simulate classically and to characterize the resources needed for universal quantum computation. Here our emphasis is different: we seek data-derived quantities, computable directly from empirical datasets, that assess compatibility with generative circuit families such as IQP.

\paragraph{Indicators derived from classical simulation cost.}
One established way to assess the practical difficulty of a quantum circuit family is to estimate the classical resources needed to simulate its output distribution. In tensor-network methods, for example, this cost depends on the contraction structure of the circuit, such as induced treewidth or contraction order, and therefore yields instance-dependent proxies for classical simulatability. Recent work on random circuit sampling in arbitrary geometries develops this viewpoint explicitly and introduces an \emph{asymptotic complexity density}, a quantity linked to tensor-network contraction cost that helps identify regimes in which random circuits become difficult to simulate classically \citep{decross2025computational}.

These simulation-cost indicators are informative, but they are inherently circuit-centric: they require an explicit description of the quantum circuit and assess hardness through the cost of classically simulating that circuit or its output distribution. Our question is different. Given only samples from a real-world dataset, can we determine whether the dataset has structural features that align with the inductive biases of a classically hard quantum generative family? This dataset-level compatibility question is not addressed by simulation-cost indicators.

\paragraph{State- and entanglement-based diagnostics.}
A related line of work studies properties of quantum states and dynamics that are informative about classical simulatability. In particular, limited entanglement growth can make certain quantum evolutions efficiently simulable by classical methods \citep{vidal2003efficient}. At the same time, entanglement alone is not a reliable predictor of classical hardness: for example, Clifford circuits can generate highly entangled states while still admitting efficient classical simulation under the Gottesman–Knill theorem \citep{aaronson2004improved}. This has motivated more refined diagnostics that aim to capture features of quantum states beyond entanglement alone, especially those associated with non-stabilizer structure.

\paragraph{From circuit-centric diagnostics to dataset-level compatibility indicators.}
Despite this progress, many application settings begin not with a quantum circuit or a prepared quantum state, but with an empirical dataset of classical origin, such as sensor measurements, scientific data, or images. In such settings, circuit- and state-based diagnostics are not directly applicable, because no quantum description is given \emph{a priori}. What is needed instead are indicators that can be computed directly from samples and that answer two distinct questions: first, whether the dataset exhibits structure that is naturally expressible by a target hard-to-sample-from quantum generative family, such as the parity-interference structure associated with IQP circuits; and second, whether the dataset contains dependence structure that remains nontrivial from a classical modeling perspective, beyond pairwise or tree-structured correlations.

This motivates the two indicators introduced in this work, QCLI and CCI. Both are defined directly at the dataset level, so they can be evaluated on empirical data without first specifying a quantum circuit. At the same time, they remain operationally connected to the inductive bias of IQP-type generative models and to the question of representational alignment between a dataset and a classically hard quantum circuit family.
Our goal in this section is therefore to formalize two complementary dataset-level indicators: one that measures how well a dataset aligns with the structural biases of classically hard quantum generative models, and one that measures the extent to which the dataset exhibits genuinely higher-order dependence beyond pairwise structure.

We develop the first indicator through the lens of IQP circuits \citep{shepherd2009temporally, leontica2024exploring}, a standard family of commuting quantum circuits that has played a central role in quantum advantage proposals and whose output distributions naturally exhibit parity-structured interference patterns. An $n$-qubit IQP circuit can be written as:
\begin{equation}
U
= \exp\!\Bigl(i\!\!\sum_{s\subseteq [n]} \theta_s X_s\Bigr)
= H^{\otimes n}
\Bigl(
    \exp\!\bigl(i\!\!\sum_{s\subseteq [n]} \theta_s Z_s\bigr)
\Bigr)
H^{\otimes n},
\label{eq:iqp_unitary}
\end{equation}
where $X_s := \prod_{j\in s} X_j$ and $Z_s := \prod_{j\in s} Z_j$
denote multi-body Pauli strings acting on the subset $s\subseteq[n]$.
Because all $X_s$ commute, this circuit family is simultaneously diagonalizable in the $X$-basis.

Each subset $s$ corresponds to a specific combination of qubit indices, 
for example $s=\{1\}$ gives a single-qubit rotation $X_1$, 
$s=\{2,4\}$ gives a two-qubit interaction $X_2X_4$, 
and $s=\{1,3,5\}$ represents a three-qubit correlation term $X_1X_3X_5$. 
Hence, the sum over all $s\subseteq[n]$ in Eq.~(\ref{eq:iqp_unitary}) 
covers all possible $k$-body interactions ($1\le k\le n$), 
weighted by their corresponding parameters $\theta_s$. 
In practical implementations, however, one typically realizes only a restricted subset of terms, most commonly one- and two-body interactions, due to limited hardware connectivity and resource constraints; implementing all $2^n$ parity terms would require $2^n$ independent parameters $\{\theta_s\}$, which is prohibitively large for learning.
This flexibility means that both sparse and fully connected versions of the IQP model  preserve the same commutative structure and are still considered valid IQP circuits.

\subsection{Measurement distribution and Walsh--Fourier structure}
The measurement outcome in the computational ($Z$) basis (detailed in Appendix~\ref{sec:ddt_1}) is
\begin{eqnarray}
p(z)
&=& \bigl|\langle z|U|0\rangle^{\otimes n}\bigr|^2 \nonumber \\
&=&
\left|
    \frac{1}{2^n}
    \sum_{x\in\{0,1\}^n}
        (-1)^{z\cdot x}
        e^{i\sum_s \theta_s \chi_s(x)}
\right|^2,
\label{eq:iqp_walsh}
\end{eqnarray}
where $\chi_s(x)=(-1)^{\oplus_{j\in s}x_j}$ is the parity (Walsh) character.
Hence the IQP output probabilities
\begin{equation}
p(z)=\bigl|2^{-n}\!\sum_x (-1)^{z\cdot x} e^{i\phi(x)}\bigr|^2,
\quad
\phi(x)=\sum_s \theta_s\chi_s(x),
\end{equation}
are precisely the \emph{squared modulus of the Walsh--Hadamard transform (WHT)} of the phase function $f(x)=e^{i\phi(x)}$ \citep{park2025complexity, bremner2011classical}.
This reveals that the quantum interference pattern of an IQP circuit manifests as the distribution of power among the Fourier--Walsh coefficients of $f$.

\subsection{Correlation order decomposition and high-order dependence}
\label{sec:codhoc}
We consider a dataset of $M$ binary samples
$\{x^{(i)}\}_{i=1}^{M}$ where
$x^{(i)} \in \{0,1\}^n$.
The empirical probability mass function (pmf) is
\begin{equation}
 p(x) = \frac{1}{M}\sum_{i=1}^{M}
\mathbf{1}\!\left\{x^{(i)}=x\right\}.   
\end{equation}
To examine whether an IQP circuit can serve as a suitable inductive bias for generating such a dataset, we apply the WHT to $p(x)$ and square the result, obtaining its Walsh–Fourier spectrum indexed by subsets $s \subseteq [n]$. Let $P(s)$ denote the normalized squared Fourier power associated with subset $s$:
\begin{eqnarray}
\label{eq:ps}
P(s)
&=&
\Bigl|2^{-n}\!\sum_x p(x)(-1)^{\oplus_{i\in s}x_i}\Bigr|^2
\in[0,1]. 
\end{eqnarray}
We then define the \emph{order–$k$ power fraction}, obtained by grouping the spectral power according to its correlation order $|s| = k$:

\begin{equation}
\label{eq:mk}
m_k
=
\frac{
\sum_{|s|=k} P(s)
}{
\sum_{s\subseteq[n]} P(s)
},
\qquad
k=0,1,\ldots,n,
\end{equation}
so that $\sum_{k=0}^n m_k = 1$.
The vector $m = (m_0,\dots,m_n)$ summarizes how correlation power
is distributed across interaction orders. For fully independent and identically distributed fair bitstrings, the spectral power concentrates near the middle orders and follows a binomial reference distribution:
$b_k = \frac{\binom{n}{k}}{2^n}$.
This serves as a classical randomness baseline, which will later be used to construct the correlation indicators described in the next section. Details of the empirical estimation procedure used in our implementation are provided in Appendix~\ref{app:empirical-qcli}.

\subsection{Quantum Correlation--Likeness Indicator (QCLI)}
We measure the discrepancy between the empirical order distribution $m$
and the classical binomial reference $b$
using the Jensen--Shannon divergence \citep{menendez1997jensen}:
\begin{eqnarray}
&&D_\mathrm{JS}(m\Vert b)
=\tfrac12 D_{\mathrm{KL}}\!\big(m\Vert M\big) 
+\tfrac12 D_{\mathrm{KL}}\!\big(b\Vert M\big) \nonumber\\
&&:= I_{\text{QCLI}}(p)
\quad,0 \le I_{\text{QCLI}}(p) \le 1,
\label{eq:qcli}
\end{eqnarray}
where $D_{\mathrm{KL}}$ is the Kullback--Leibler divergence \citep{kullback1951kullback}, and $M=\tfrac12(m+b)$.
Here we explicitly write $I_{\text{QCLI}}(p)$ to emphasize that the indicator
quantifies a property of the underlying distribution $p$.
For notational simplicity, we will suppress the dependence on $p$ and write
$I_{\text{QCLI}}$ in the following.
$I_{\text{QCLI}}$ evaluates how far the \emph{shape} of correlation 
order spectrum deviates from i.i.d.\ classical randomness.
Large values indicate structured interference patterns that are 
not compatible with binomial noise. This quantity has several good properties to be used as an indicator:
\begin{enumerate}
    \item \textbf{Correct null behavior.} If the data have no structured parity content beyond i.i.d. noise, then $m \approx b$ in expectation and $I_{\text{QCLI}} \rightarrow 0$.
    \item \textbf{Permutation invariance.} Orders ignore which bits participate; only the order distribution matters. This matches IQP’s combinatorial ``order'' view.
    \item \textbf{Bounded scale.} $D_\mathrm{JS}(m\Vert b)$ is bounded in $[0,1]$, while $D_\mathrm{KL}(m\Vert b) = \sum_i m_i\log\frac{m_i}{b_i}
     \in [0, n]$ since for the binomial baseline $b_k=\tfrac{\binom{n}{k}}{2^n}$,
    the smallest probability $b_{\min}=2^{-n}$ implies that the
    maximum $D_\mathrm{KL}$ grows linearly with system size $n$.
    \item \textbf{Scale-free across $n$.} Associate with the bounded scale, because both $m$ and $b$ live on $\{0,\dots,n\}$, the statistic compares shapes, not absolute mass at any specific order. This lets us compare datasets of different bit-lengths.
\end{enumerate}
It can be shown that, to second order, 
$D_\mathrm{JS}(m\Vert b)$ behaves as a weighted $\ell_2$ distance 
around the binomial baseline (Appendix~\ref{app:JS_distance}): 
\begin{equation}
D_{\mathrm{JS}}(m\Vert b) \;\approx\; \frac{1}{8\ln 2}\sum_{k=0}^n 
\frac{(m_k - b_k)^2}{b_k}.
\end{equation}
Consequently, by defining the total variation distance 
$\mathrm{TV}(m,b)=\tfrac12\sum_k |m_k-b_k|$, 
and applying the Cauchy--Schwarz inequality,
\begin{align}
\sum_k |m_k - b_k|
&= \sum_k |\delta_k|
\;\le\;
\Bigl(\sum_k \frac{\delta_k^2}{b_k}\Bigr)^{1/2}
\Bigl(\sum_k b_k\Bigr)^{1/2}
\nonumber\\
&= \Bigl(\sum_k \frac{\delta_k^2}{b_k}\Bigr)^{1/2},
\end{align}
where we have used $\sum_k b_k = 1$. 
It follows that
\begin{align}
\mathrm{TV}(m,b)
&\le \tfrac12\sqrt{\sum_k \frac{\delta_k^2}{b_k}}
\;\approx\;
\sqrt{(2\ln 2) D_{\mathrm{JS}}(m\Vert b)}.
\end{align}
We obtain the total variation bound of our correlation indicator $I_{\text{QCLI}}$:
\begin{eqnarray}
\mathrm{TV}(m,b) \lessapprox O(\sqrt{I_{\text{QCLI}}}).
\end{eqnarray}
This relation provides an interpretable link between the spectral and probabilistic distances: the square root dependence implies that datasets with moderately $I_{\text{QCLI}}$ already deviate significantly from binomial randomness in total variation. In other words, $I_{\text{QCLI}}$ quantifies how much of the empirical correlation structure cannot be explained by low-order i.i.d.\ noise, serving as a ``quantum-likeness'' measure of the data’s parity spectrum. 
{We use ``likeness'' operationally: $I_{\text{QCLI}}$ measures deviation from a binomial Walsh-order null in a way motivated by IQP parity interference. It should not be interpreted as a certificate that a distribution is quantum-generated.}
High values of $I_{\text{QCLI}}$ correspond to spectra exhibiting strong constructive or destructive interference patterns that are naturally compatible with IQP-type output structures, while small values indicate that the dataset remains close to the classical combinatorial baseline. Because of its bounded and scale-free nature, $I_{\text{QCLI}}$ thus provides a universal, dimension-agnostic indicator of how well a dataset aligns with the inductive bias of IQP circuits. 

A useful way to interpret $I_{\text{QCLI}}$ is to view $\{m_k\}_{k=0}^n$ as a coarse ``frequency-domain'' fingerprint of the dataset, where $m_k$ reports how much correlation power resides at interaction/correlation order $k$. In Fig.~\ref{fig:dwave_qcli_mk}, we evaluate the $I_{\text{QCLI}}$ of the D-Wave bitstring dataset \citep{scriva2023accelerating} generated from a quantum annealer with 100 qubits collected at anneal times $\tau\in\{1,10,100\}\,\mu\text{s}$. {Since these samples are produced by a physical quantum annealer, they provide a hardware-derived reference for $I_{\text{QCLI}}$, We expect $m_k$ to increase with longer annealing times because the samples concentrate more on lower-energy states (thus more structured), offering a concrete way to examine how the indicator responds to changes in the effective correlation/interference structure of the output distribution.}  For an i.i.d.-like distribution ($\tau=1\,\mu\text{s}$), this fingerprint is smooth and well-approximated by the binomial reference $\{b_k\}$ (as we have mentioned above), so $|m_k-b_k|$ is small and diffuse across $k$, yielding a low $I_{\text{QCLI}}$. In contrast, when the data exhibits coherent high-order structure, the deviations from $b_k$ concentrate into distinct peaks and troughs at specific correlation orders (here, $\tau=100\,\mu\text{s}$), producing a larger $I_{\text{QCLI}}$. This ``spiky'' and alternating pattern is precisely what one expects from IQP-type generative mechanisms: IQP output probabilities arise from interference over many parity components, leading to order-selective enhancement (constructive interference) and suppression (destructive interference). Consequently, high $I_{\text{QCLI}}$ corresponds to spectra with pronounced interference-like structure, and therefore potentially indicates stronger inductive-bias compatibility with IQP-style output distributions, as we will show in the following Sec.~\ref{sec:qcli-mmd}. 

\begin{figure}[ht]
\centering
\includegraphics[width=\linewidth]{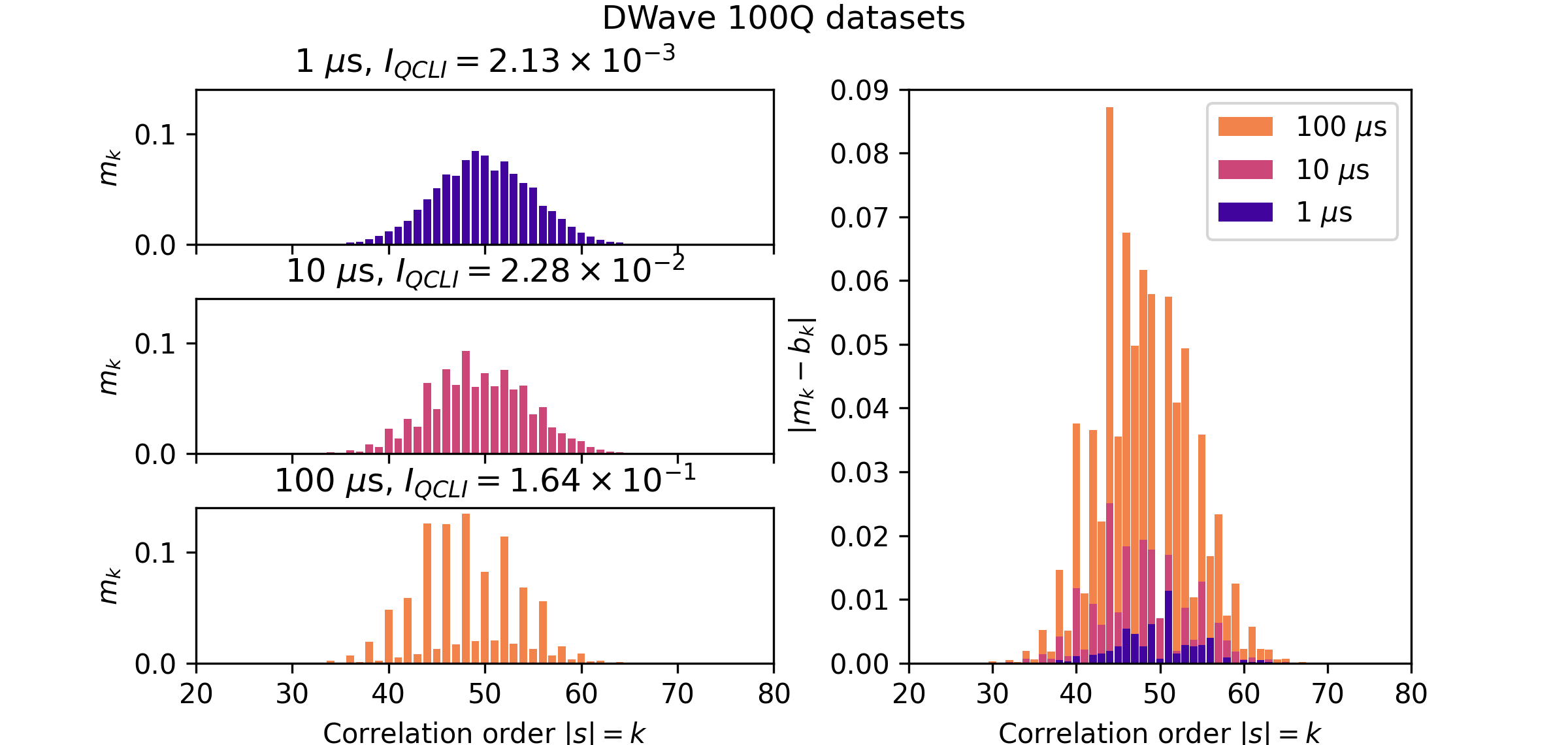}
\caption{
Correlation-order spectra and $I_{\text{QCLI}}$ on D-Wave 100Q datasets \citep{scriva2023accelerating}.
Left: the empirical correlation-order spectrum $m_k$ (spectral mass aggregated over Walsh–Hadamard parities of order $|s|=k$) for three D-Wave 100Q datasets collected at anneal times $\tau\in\{1,10,100\}\,\mu\text{s}$, with the corresponding $I_{\text{QCLI}}$ shown in the titles. Right: the absolute deviation $|m_k-b_k|$ from the i.i.d. ideal binomial baseline spectrum $b_k$. As $\tau$ increases, $I_{\text{QCLI}}$ rises and the spectra become visibly more structured, exhibiting sharper peaks and pronounced cancellations across correlation orders, consistent with stronger constructive/destructive interference patterns.
}
 \label{fig:dwave_qcli_mk}
\end{figure}

\subsection{Classical Correlation--Complexity Indicator (CCI)}
While $I_{\text{QCLI}}$ captures how the spectral structure of a dataset
deviates from classical binomial randomness in the \emph{Fourier domain},
we also wish to quantify how much of its dependence structure is irreducible to pairwise models in the \emph{probability domain}.
For this purpose we define the \emph{Classical Correlation--Complexity Indicator}:
\begin{eqnarray}
&&I_{\text{CCI}}
= 1 - \frac{I_{\mathrm{TC}}^{\text{tree}}}{I_{\mathrm{TC}}}, \nonumber\\
&&I_{\mathrm{TC}} = \sum_{i=1}^n \mathcal{H}(X_i) - \mathcal{H}(X_1, X_2, \cdots ,X_n),
\label{eq:icci_def}
\end{eqnarray}
where $\mathcal{H}(X)=-\sum_{x\in X}p(x)\log p(x)$ is the entropy and $I_{\mathrm{TC}}$ is the total correlation (also called multivariate mutual
information) measuring the overall statistical dependence among all $n$ variables, originating from the classical information-theoretic notion of \emph{multi-information} introduced by \citet{watanabe1960information} and further developed in \citet{schneidman2003network} to study collective correlations in binary neural populations.
And $I_{\mathrm{TC}}^{\text{tree}}$ is the portion of this dependency that can be
captured by a maximum-likelihood Chow--Liu tree model~\citep{chow1968approximating}:
\begin{equation}
I_{\mathrm{TC}}^{\text{tree}}
= \sum_{(i,j)\in T^\ast} I(X_i;X_j),
\end{equation}
with $T^\ast$ being the optimal dependency tree
constructed from pairwise mutual information $I(X_i;X_j)$.

\paragraph{Computation via maximum-likelihood Chow--Liu tree.}
Given a dataset $\{x^{(m)}\}_{m=1}^M$ of $n$ binary variables
$X=(X_1,\dots,X_n)$, we estimate pairwise marginals
$p_{ij}(x_i,x_j)$ and univariate marginals $p_i(x_i)$.
The pairwise mutual information matrix is then
\begin{equation}
I(X_i;X_j)
= \sum_{x_i,x_j} p_{ij}(x_i,x_j)
    \log \frac{p_{ij}(x_i,x_j)}{p_i(x_i)p_j(x_j)}.
\end{equation}
The Chow--Liu algorithm constructs the \emph{maximum-weight spanning tree}
over $n$ nodes, where each edge $(i,j)$ is assigned weight
$w_{ij} = I(X_i;X_j)$.
The optimal tree structure $T^\ast$ maximizes the total pairwise information:
\begin{equation}
T^\ast = \arg\max_{T\in \mathcal{T}_n} 
\sum_{(i,j)\in T} I(X_i;X_j),
\end{equation}
where $\mathcal{T}_n$ denotes all spanning trees over $n$ variables.
Once the optimal tree $T^\ast$ is found,
the best tree-structured approximation of the joint distribution is
\begin{equation}
p_T(x)
= \prod_{i=1}^{n} p_i(x_i)
  \prod_{(i,j)\in T^\ast}
  \frac{p_{ij}(x_i,x_j)}{p_i(x_i)p_j(x_j)}.
\end{equation}
The corresponding total correlation captured by the tree model is
\begin{equation}
I_{\mathrm{TC}}^{\text{tree}}
= \sum_{(i,j)\in T^\ast} I(X_i;X_j),
\end{equation}
and the residual dependence beyond pairwise structure follows as
\begin{equation}
I_{\text{CCI}}
= 1 - \frac{I_{\mathrm{TC}}^{\text{tree}}}{I_{\mathrm{TC}}},
\qquad
I_{\mathrm{TC}} = \sum_i \mathcal{H}(X_i) - \mathcal{H}(X).
\end{equation}

\paragraph{Information--theoretic interpretation.}
Chow and Liu~\citep{chow1968approximating} proved that
the tree distribution $p_T(x)$ obtained above
maximizes the likelihood of the data among all possible tree-structured
models, or equivalently minimizes the Kullback--Leibler divergence:
\begin{equation}
T^\ast
= \arg\min_{T\in\mathcal{T}_n}
D_{\mathrm{KL}}\!\big(p(x)\,\Vert\,p_T(x)\big).
\end{equation}
Moreover, the KL divergence between the true joint and its
tree approximation decomposes as
\begin{equation}
D_{\mathrm{KL}}\!\big(p(x)\Vert p_T(x)\big)
= I_{\mathrm{TC}} - I_{\mathrm{TC}}^{\text{tree}}.
\end{equation}
Hence $I_{\mathrm{TC}}^{\text{tree}}$ represents the portion of
total correlation retained by the optimal pairwise model,
and the residual $I_{\mathrm{TC}} - I_{\mathrm{TC}}^{\text{tree}}$
quantifies the irreducible multi-body dependency lost by tree approximation.

$I_{\text{CCI}}$ therefore measures the
\emph{fraction of total correlation that is irreducible to pairwise structure}.
By construction ($D_{\mathrm{KL}}\!\big(p(x)\Vert p_T(x)\big) \ge 0$ and hence $I_{\mathrm{TC}} \ge I_{\mathrm{TC}}^{\text{tree}} \quad \forall I_{\mathrm{TC}}^{\text{tree}}$),
\[
0 \le I_{\text{CCI}} \le 1,
\]
with small values indicating data that are well explained by local pairwise
dependencies, and large values indicating complex, nonlocal interactions.
When combined with the spectral indicator $I_{\text{QCLI}}$,
the pair $(I_{\text{QCLI}}, I_{\text{CCI}})$
forms a two-dimensional \emph{quantum--classical relevance map}:
datasets with high scores on both axes are those that simultaneously
deviate from classical combinatorial randomness and defy pairwise
graphical modeling, a regime in which generative quantum models such as
IQP circuits may exhibit genuine advantage in parameter and data efficiency.

Having defined $I_{\text{QCLI}}$ and $I_{\text{CCI}}$ as complementary axes of the Correlation–Complexity Map, we now explain why $I_{\text{QCLI}}$ is not merely descriptive, but could be predictive for IQP generative modeling. Specifically, we connect $I_{\text{QCLI}}$ to an irreducible approximation floor for truncated IQP families under a Maximum Mean Discrepancy (MMD) objective \citep{gretton2012kernel}, and then validate this support-mismatch mechanism empirically. We further discuss the structural implications of high-$I_\text{QCLI}$ regimes and their relationship to beyond-pairwise dependence captured by $I_{\text{CCI}}$.

\subsection{$I_\text{QCLI}$ as a Conditional Proxy for Architecture--Data Support Alignment}
\label{sec:qcli-mmd}

To formalize the notion of \emph{architecture--data support alignment} underlying $I_\text{QCLI}$, we consider a restricted IQP family whose expressive degrees of freedom are controlled by an interaction-order parameter $d$. Intuitively, smaller $d$ limits which parity components the model can directly parameterize. As a result, if the target distribution places substantial Walsh--Fourier mass outside the architecture-accessible support, this mass creates a representational mismatch that cannot be removed by optimization alone.

\begin{definition}[Order-$d$ IQP circuit]
An IQP circuit is called an \emph{order-$d$} IQP circuit if its commuting Hamiltonian contains only Pauli-$Z$ strings of Hamming weight at most $d$. Equivalently, it has the form
\[
U
=
H^{\otimes n}
\exp\!\Bigl(
i\!\!\sum_{\substack{s\subseteq[n]\\ |s|\le d}}
\theta_s Z_s
\Bigr)
H^{\otimes n},
\]
where each term $Z_s := \prod_{j\in s} Z_j$ acts on at most $d$ qubits. This restriction controls which parity terms appear in the phase polynomial and therefore which Walsh--Fourier components the model can directly tune within this family.
\end{definition}

The complete proof of the following proposition is provided in Appendix~\ref{app:proof-of-qcli-mmd}.

\begin{proposition}[Walsh-support mismatch induces an MMD floor]
\label{prop:walsh-support-mmd-floor}
Let $p$ be a binary distribution over $\{0,1\}^n$, and let $\mathcal{F}_{\mathrm{IQP}}$ be a restricted IQP architecture with tunable Walsh--Fourier support $S_A\subseteq 2^{[n]}$. Let $\mathcal{L}_{\mathrm{MMD}}(p,q)$ denote the MMD loss between $p$ and $q$ under a positive kernel. For example, for a Gaussian kernel $k(b,b')$ on bitstrings,
\begin{eqnarray}
\mathcal{L}_{\mathrm{MMD}}(p,q)
&=&
\mathbb{E}_{b,b' \sim p}\!\big[k(b,b')\big]
-2\,\mathbb{E}_{b\sim p,\; b'\sim q}\!\big[k(b,b')\big] \nonumber \\
&&+
\mathbb{E}_{b,b'\sim q}\!\big[k(b,b')\big].
\label{eq:mmd-raw}
\end{eqnarray}
Let \[ P(S) := \bigl| 2^{-n}\!\sum_x p(x)\,\chi_S(x) \bigr|^2, \quad \chi_S(x)=(-1)^{\oplus_{j\in S} x_j}, \] denote the Walsh--Fourier power of $p$ at frequency $S$. Then there exists a constant $C_A>0$, depending only on the kernel normalization and the chosen IQP architecture, such that
\begin{equation}
\label{eq:support-mmd-floor}
\min_{q\in\mathcal{F}_{\mathrm{IQP}}}
\mathcal{L}_{\mathrm{MMD}}(p,q)
\;\ge\;
C_A
\sum_{S\notin S_A} P(S).
\end{equation}

Consequently, any Walsh--Fourier power of the target distribution outside the architecture-accessible support contributes an irreducible term to the MMD objective.
\end{proposition}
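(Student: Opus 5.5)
We diagonalize the MMD in the Walsh basis and then use the support restriction to isolate the frequencies the architecture cannot tune. The statement is loosely posed: ``tunable support $S_A$'' is not formally defined. We therefore adopt the interpretation that makes it provable. Every $q\in\mathcal{F}_{\mathrm{IQP}}$ has Walsh coefficients $\hat q(S):=2^{-n}\sum_x q(x)\chi_S(x)$ that vanish, or are fixed to a reference value $\hat q_0(S)$ independent of $\theta$, for all $S\notin S_A$. We take the vanishing version; the reference version follows by absorbing $\hat q_0$ into a shifted floor.

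\textbf{Step 1 (spectral form of the MMD).} Restrict to shift-invariant positive kernels on the hypercube, $k(b,b')=\kappa(b\oplus b')$. The Gaussian kernel $\exp(-\|b-b'\|^2/2\sigma^2)$ qualifies, since it depends only on the Hamming distance. For such kernels the characters $\chi_S$ are eigenfunctions, with eigenvalues $\lambda_S=\sum_y \kappa(y)\chi_S(y)$, and these are $\ge 0$ by positive-definiteness (Bochner on $\mathbb{Z}_2^n$). Expanding Eq.~(\ref{eq:mmd-raw}) with $p-q=\sum_S \widehat{(p-q)}(S)\chi_S$ gives
\begin{equation}
\mathcal{L}_{\mathrm{MMD}}(p,q)=c_n\sum_{S\subseteq[n]}\lambda_S\,\bigl|\hat p(S)-\hat q(S)\bigr|^2 ,
\end{equation}
where $c_n$ is a normalization constant ($2^{2n}$ up to conventions). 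For the Gaussian kernel, $\kappa$ factorizes over bits as $\prod_j e^{-y_j/2\sigma^2}$. Hence $\lambda_S=(1+e^{-1/2\sigma^2})^{n-|S|}(1-e^{-1/2\sigma^2})^{|S|}>0$ explicitly.

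\textbf{Step 2 (drop accessible terms).} Every summand is nonnegative, so we discard all $S\in S_A$. On $S\notin S_A$ we have $\hat q(S)=0$, which leaves the $q$-independent lower bound
\[
\mathcal{L}_{\mathrm{MMD}}(p,q)\ge c_n\sum_{S\notin S_A}\lambda_S\,P(S).
\]
Setting $C_A:=c_n\min_{S\notin S_A}\lambda_S>0$ and minimizing over $q$ yields Eq.~(\ref{eq:support-mmd-floor}). The constant depends only on the kernel (through $\lambda_S$) and on the architecture (through which $S$ are excluded), as claimed.

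\textbf{Main obstacle.} The delicate step is justifying the structural hypothesis on $\hat q$. For an order-$d$ IQP circuit, the output distribution is $|\widehat{e^{i\phi}}|^2$, an autocorrelation of $e^{i\phi}$. Its Walsh coefficients are $\hat q(S)=\langle 0|U^\dagger Z_S U|0\rangle$, and these generally do not vanish for $|S|>d$: products of low-order terms generate higher parities. So ``order $\le d$'' does not literally give support in $\{|S|\le d\}$.

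We would handle this in one of two ways. The first is to define $S_A$ as the set of frequencies where $\hat q$ can vary. The second is to allow a bounded leakage term, $|\hat q(S)|\le \epsilon_S$ on $S\notin S_A$, which replaces $P(S)$ by $(\sqrt{P(S)}-\epsilon_S)_+^2$. We would state the result under the first definition, remark on the second, and note that the floor vanishes precisely when the target's Walsh power lies in $S_A$.
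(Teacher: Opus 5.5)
Your argument is essentially the paper's proof: expand the MMD diagonally in the Walsh basis, assume the model's parity expectations vanish off $S_A$, discard the nonnegative in-support terms, and bound the surviving coefficients from below. Your shift-invariance/Bochner derivation, with explicit Gaussian eigenvalues $\lambda_S=(1+r)^{n-|S|}(1-r)^{|S|}$ and $r=e^{-1/(2\sigma^2)}$, makes rigorous the expansion and the strict positivity that the paper only asserts.

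One caution on your proposed fixes, which applies equally to the paper's ``without loss of generality'' step. Since $\theta=0$ gives $q=\delta_{0^n}$, any parity the family cannot tune (those with $|S\cap s|$ even for every generator $s$) is pinned at $\mathbb{E}_q[\chi_S]=1$ rather than $0$, and any valid leakage bound has $\epsilon_S\ge 2^{-n}\ge\sqrt{P(S)}$. So neither route recovers the unshifted $P(S)$ floor, and that floor in fact fails for $p=\delta_{0^n}$: the minimal loss is zero, yet $\sum_{S\notin S_A}P(S)=|2^{[n]}\setminus S_A|\,4^{-n}>0$.
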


\begin{assumption}[QCLI--support alignment]
\label{assump:qcli-support}
Let $p$ be as above, with empirical correlation--order histogram $m=(m_0,\dots,m_n)$ and binomial baseline $b=(b_0,\dots,b_n)$, 
with corresponding $I_{\mathrm{QCLI}}(p) = D_{\mathrm{JS}}(m\Vert b)$ defined in Eq.~(\ref{eq:qcli}).
Let $S_A\subsetneq 2^{[n]}$ denote the Walsh--Fourier frequencies tunable by the chosen restricted IQP architecture. We assume that there exists a constant $\kappa>0$, depending only on the architecture and kernel representation, such that
\begin{equation}
\label{eq:assumption-qcli-support}
\sum_{S\notin S_A} P(S)
\;\ge\;
\kappa\,\bigl(1-I_{\mathrm{QCLI}}(p)\bigr).
\end{equation}
Equivalently, within the dataset and architecture regimes considered here, the deviation of the correlation-order histogram from the binomial baseline is assumed to correlate with how much Walsh--Fourier power lies inside the architecture-accessible support. 
\end{assumption}

Combining Proposition~\ref{prop:walsh-support-mmd-floor} with Assumption~\ref{assump:qcli-support} gives
\begin{equation}
\label{eq:qcli-mmd-lowerbound}
\min_{q\in\mathcal{F}_{\mathrm{IQP}}}
\mathcal{L}_{\mathrm{MMD}}(p,q)
\;\ge\;
C'\,\bigl(1-I_{\text{QCLI}}(p)\bigr),
\end{equation}
where $C'=C_A\kappa>0$ depends only on the kernel representation and the chosen IQP architecture.

\paragraph{Interpretation and scope.}
Proposition~\ref{prop:walsh-support-mmd-floor} formalizes a support-mismatch mechanism: spectral mass outside $S_A$ cannot be matched by the restricted IQP family and hence contributes irreducibly to the MMD objective. The connection to $I_{\mathrm{QCLI}}$ is conditional on Assumption~\ref{assump:qcli-support}, which treats the order-spectrum deviation as a proxy for architecture-accessible Walsh support. Under this alignment assumption, larger $I_{\mathrm{QCLI}}$ indicates less inaccessible Walsh mass and hence a smaller support-induced MMD floor for the chosen architecture. We treat this alignment as an empirical hypothesis and assess it in Sec.~\ref{sec:qcli-mmd-empirical}.

The proposition is intended for the practically relevant truncated regime $S_A\subsetneq 2^{[n]}$, where the architecture cannot tune all parity components. The support set $S_A$ depends on the architecture and on the kernel/Walsh representation used to express the MMD objective. In the full-support case $S_A=2^{[n]}$, the outside mass $\sum_{S\notin S_A}P(S)$ vanishes, so the specific obstruction captured by Eq.~(\ref{eq:support-mmd-floor}) becomes vacuous. This does not imply perfect representability by IQP circuits; it only means that missing Walsh support is no longer the source of this lower bound. Other sources of error, including finite-budget optimization, finite sampling, kernel mismatch, and architecture-intrinsic constraints, may still affect the achieved MMD.

\subsection{Empirically Probing the QCLI–MMD Mechanism}
\label{sec:qcli-mmd-empirical}

We next move from analysis to empirical probing. {While Eq.~(\ref{eq:qcli-mmd-lowerbound}), obtained by combining Proposition~\ref{prop:walsh-support-mmd-floor} with Assumption~\ref{assump:qcli-support}, provides a \emph{conditional lower bound} on the best achievable MMD for a restricted IQP family}, practical training rarely approaches this limit, since optimization is imperfect, finite-sample estimation introduces variance, and quantum (or shot-based) sampling adds additional noise. Consequently, rather than attempting to \emph{tighten} or \emph{achieve} the bound directly, we test the underlying prediction of the support-mismatch mechanism: as $I_{\text{QCLI}}$ of the dataset increases, the \emph{observed} MMD attainable under a fixed learner IQP architecture and training budget should systematically decrease, and the residual error should be consistent with spectral mass lying outside the model's tunable support.

\paragraph{Dataset generation with controlled $I_{\text{QCLI}}$.}
To mimic the practical setting where the target distribution is unknown and may exhibit an unknown (and potentially high-order) parity support, we generate a diverse family of target datasets by sampling $10^4$ bitstrings from $n=16$-qubit IQP circuits with gate locality up to $4$ and varying circuit size across seven gate counts,
\[
G_n^{\text{data}} \in \{140,\,280,\,420,\,560,\,700,\,840,\,1050\}.
\]
To obtain targets with a wide range of spectral structures, we optimize the gate parameters of these \emph{data-generating} circuits using Simultaneous Perturbation Stochastic Approximation (SPSA) with the objective of increasing the $I_{\text{QCLI}}$, and then \emph{filter} the resulting circuits according to their measured $I_{\text{QCLI}}$. This procedure yields IQP-generated datasets whose $I_{\text{QCLI}}$ spans from values close to $0$ up to approximately $0.7$, enabling a controlled study of how architecture--data alignment (as quantified by $I_{\text{QCLI}}$) manifests in the achievable MMD under a fixed learner family. Because the generator locality ($\le 4$) is strictly larger than the learner locality used below ($\le 2$), the learner is guaranteed to face a nontrivial and \emph{a prior} unknown support mismatch, analogous to real-data scenarios.

\paragraph{Fixed-architecture IQP learner with support mismatch.}
For the learner, we fix the circuit architecture to be strictly more restricted than the generator by construction: we consider three settings with gate number
\[
G_n \in \{50,\,100,\,150\},
\]
and impose gate locality up to $2$.
This choice ensures that the learner's \emph{tunable parity support} cannot fully cover that of the order-$4$ data-generating circuits, thereby creating a controlled support mismatch. We train the learner using the ``train on classical, deploy on quantum'' scheme proposed in \cite{recio2025train}, while deferring method details to Appendix~ \ref{app:tocdoq-details}. This controlled setup allows us to examine how representational alignment, as quantified by $I_{\text{QCLI}}$, influences the approximation mismatch captured by the MMD loss.

\paragraph{Empirical probe.}
For each target dataset (indexed by its $I_{\text{QCLI}}$ value), we fit the fixed-architecture learner under the same optimization (\cite{recio2025train}) protocol and training budget (Adam with 5000 iterations and learning rate $10^{-4}$) and record the resulting MMD loss. Fig.~\ref{fig:qcli_mmd} summarizes the relationship between $I_{\text{QCLI}}$ and the achieved MMD across all constructed targets for three learner settings, $G_n\in\{50,100,150\}$. Although individual runs exhibit variability (reflecting finite-budget optimization and estimation noise), the \emph{upper envelope} of the observed MMD decreases systematically as $I_{\text{QCLI}}$ increases.

Moreover, increasing the learner gate count $G_n$ shifts the envelopes downward across $I_{\text{QCLI}}$ values. Empirically, both the \emph{lower envelope} and the \emph{upper envelope} decrease as $G_n$ increases. This is consistent with the support-mismatch mechanism: larger $G_n$ yields a more expressive IQP family and, for our architecture class, typically enlarges the effective tunable support $S_A$, thereby reducing the residual spectral mass $\sum_{S\notin S_A}P(S)$ that drives the irreducible mismatch in {Proposition~\ref{prop:walsh-support-mmd-floor}}. In particular, the reduction of the upper envelope in the low-$I_{\text{QCLI}}$ (mismatch-dominated) regime across different $G_n$ suggests that even when targets exhibit weak IQP-aligned spectral structure, a less truncated learner can partially mitigate leakage outside $S_A$ and therefore reduce the worst-case mismatch observed under finite training.

\begin{figure*}[ht]
\centering
\includegraphics[width=0.9\linewidth]{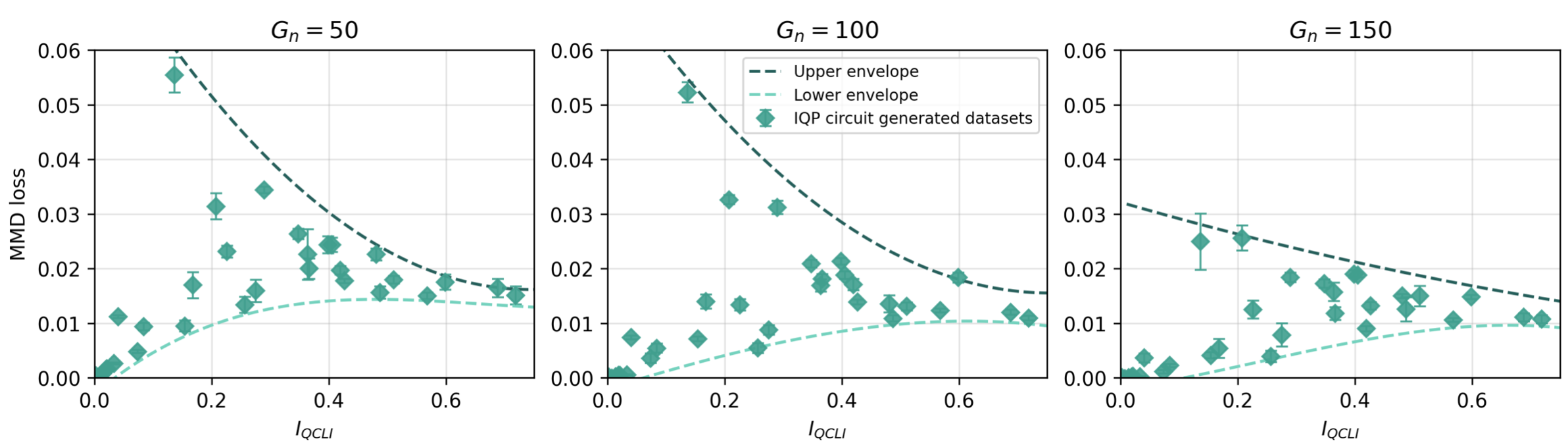}
\caption{Empirical probe of the QCLI–MMD support-mismatch mechanism.
Scatter plots show the MMD loss achieved when fitting a fixed-architecture IQP learner to a collection of IQP-generated datasets spanning a wide range of $I_{\text{QCLI}}$ values, under a fixed training budget. Each panel corresponds to a different learner expressivity setting (gate count $G_n\in\{50,100,150\}$, with restricted locality), thereby enforcing a controlled architecture–data mismatch. Dashed curves indicate the estimated upper and lower envelopes of the observed MMD as a function of $I_{\text{QCLI}}$. Details of the envelope estimation are provided in Appendix~\ref{app:envelopes}.}
 \label{fig:qcli_mmd}
\end{figure*}

The overall behavior is regime-dependent and reflects the interplay between irreducible approximation mismatch and optimization-limited training error. In the very low-$I_{\text{QCLI}}$ regime ($I_\text{QCLI}\lessapprox 0.1$), targets exhibit little apparent parity structure accessible to the learner. Under a fixed training budget, such targets can be fit to small MMD values, leading to a relatively flat lower envelope despite low QCLI. As $I_{\text{QCLI}}$ increases into an intermediate regime, targets develop nontrivial parity structure that remains poorly aligned with the learner’s restricted $S_A$, producing the largest observed mismatches in MMD loss around a value of $I_\text{QCLI} \approx 0.15$. 
At higher $I_{\text{QCLI}}$, two effects become visible. First, the \emph{upper envelope} continues to decrease; once the target’s spectral mass is sufficiently aligned with $S_A$, large support-induced errors become unlikely even under imperfect training. Second, the \emph{lower envelope} no longer rises monotonically. While it may increase initially as the targets become more structured (and hence harder to optimize under a fixed budget), at sufficiently large $I_{\text{QCLI}}$ the lower envelope saturates and can even exhibit a mild downward trend. This is consistent with improved representational alignment reducing the intrinsic approximation floor, so that beyond a certain point the benefit of alignment begins to offset the increased optimization difficulty.

{The joint dependence on $I_{\text{QCLI}}$ and $G_n$ observed in Fig.~\ref{fig:qcli_mmd} is consistent with the support-mismatch mechanism underlying Eq.~(\ref{eq:qcli-mmd-lowerbound})}, low $I_{\text{QCLI}}$ permits substantial spectral leakage outside $S_A$ and thus potentially large mismatch, whereas high $I_{\text{QCLI}}$ constrains this leakage and limits worst-case error; increasing $G_n$ reduces mismatch by reducing truncation of the learner’s effective support. We emphasize that the precise transition points between regimes depend on the data-generation process and training budget, but the qualitative trends are expected to persist across other IQP sources and learners, including practical datasets, as they reflect representational alignment rather than properties of any particular instance.

\subsection{High-$I_{\text{QCLI}}$ IQP Outputs Exhibit Elevated $I_{\text{CCI}}$}
\label{sec:high_qcli_high_cci}

{Proposition~\ref{prop:walsh-support-mmd-floor}, Eq.~(\ref{eq:qcli-mmd-lowerbound})}, and the empirical probe above motivate $I_{\text{QCLI}}$ as a proxy for \emph{architecture--data alignment} within the IQP family; when $I_{\text{QCLI}}$ is larger, a fixed-architecture (truncated) IQP learner is less likely to suffer large support-induced mismatch, and the achievable MMD correspondingly improves. Having established this \emph{approximation} perspective, we now ask a complementary \emph{structural} question: \emph{when an IQP output distribution attains high $I_{\text{QCLI}}$ (i.e., strong IQP-compatible interference structure),
what level of beyond-pairwise dependence
should we expect it to exhibit?} In other words, does ``being more IQP-aligned'' also imply that the distribution becomes \emph{classically more complicated} in the sense of requiring genuinely multivariate dependencies beyond pairwise/tree-structured models?

To address this, we perform controlled experiments in which we directly optimize the parameters of random IQP circuits with the sole objective of maximizing $I_{\text{QCLI}}$. Importantly, no constraint, penalty, or regularization is imposed on $I_{\text{CCI}}$ during optimization; $I_{\text{CCI}}$ is computed only as an auxiliary diagnostic. Across all experiments, we consistently observe an emergent coupling: as $I_{\text{QCLI}}$ increases, the $I_{\text{CCI}}$ of the \emph{same} circuits also increases, despite $I_{\text{CCI}}$ never appearing in the objective. This trend is summarized in Fig.~\ref{fig:high_qcli}.

\begin{figure}[ht]
\centering
\includegraphics[width=\linewidth]{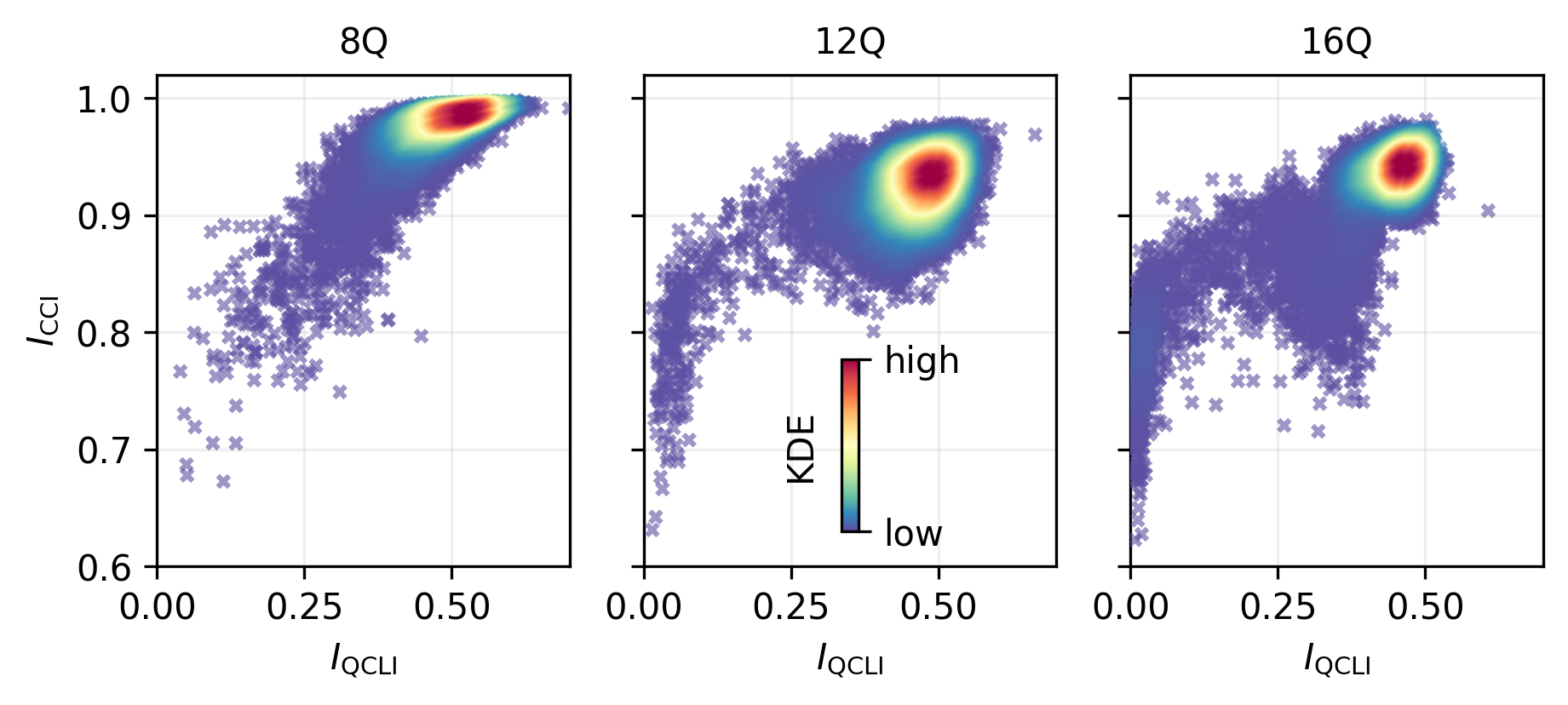}
\caption{$I_{\text{QCLI}}$ versus $I_{\text{CCI}}$ for 2-local IQP circuits with 8Q, 12Q, and 16Q, each using 150 random gates. Circuit parameters are optimized solely to maximize $I_{\text{QCLI}}$, yet higher $I_{\text{QCLI}}$ consistently coincides with elevated $I_{\text{CCI}}$ across all system sizes, indicating an emergent coupling between IQP representational alignment and global multivariate classical correlations.}
 \label{fig:high_qcli}
\end{figure}

Concretely, we perform this study under three system sizes: 8 qubits (8Q), 12 qubits (12Q), and 16 qubits (16Q). For each setting, we consider IQP circuits composed of 150 randomly selected parameterized gates, each gate acting on two qubits (2-local). Starting from random parameter initializations, we apply a SPSA optimization procedure that maximizes $I_{\text{QCLI}}$ alone. To characterize the resulting distribution of dependence measures, we sample 20000 IQP circuits per setting along the optimization process and compute their corresponding $I_{\text{QCLI}}$ and $I_{\text{CCI}}$ values.

Recall that $I_{\text{CCI}}$ quantifies the portion of total dependence that cannot be captured by any optimal tree-structured (pairwise) approximation. Empirically, we find that $I_{\text{CCI}}$ increases systematically as $I_{\text{QCLI}}$ is maximized, even though $I_{\text{CCI}}$ is never included in the objective, indicating that these two notions of structure are not independent within the IQP family.

While $I_{\text{QCLI}}$ governs architecture--data alignment and the resulting irreducible MMD floor under truncated IQP learners, high-$I_{\text{QCLI}}$ IQP outputs also tend to exhibit stronger genuinely multivariate (beyond-pairwise) classical correlations. Taken together, the results suggest a refined structural picture: within IQP-generated distributions, moving toward higher $I_{\text{QCLI}}$ simultaneously increases IQP-compatibility and strengthens dependence beyond what pairwise models can explain.

\subsection{Correlation-Complexity Map for Various Datasets}
\label{sec:Map}

The preceding sections establish two complementary interpretations of our indicators within the IQP lens: $I_{\text{QCLI}}$ acts as a proxy for architecture--data alignment that constrains the irreducible approximation mismatch (as reflected in achievable MMD under fixed IQP learners), and IQP outputs with high $I_{\text{QCLI}}$ tend to exhibit stronger beyond-pairwise dependence, as measured by $I_{\text{CCI}}$. We now move from \emph{mechanism} to \emph{diagnosis}; we compute $(I_{\text{QCLI}}, I_{\text{CCI}})$ for a diverse set of real-world, simulated, and quantum-generated datasets and place them into a common two-dimensional landscape, the \emph{Correlation--Complexity Map} (Fig.~\ref{fig:ccm}). 

We choose datasets according to a simple coverage principle. Specifically, we include (i) standard structured benchmarks (e.g., MNIST and synthetic blobs) that are expected to be largely classically expressible and have appeared in the Ref. \citep{recio2025train}, (ii) quantum-hardware or quantum-sampling datasets (D-Wave annealer samples and Random-Circuit Sampling (RCS)) that provide points for ``quantum-origin'' and interference-like structure, and (iii) dynamical/physical systems with known complicated dependence (Lorenz and turbulence) that are \emph{a priori} plausible candidates for elevated higher-order dependence. In particular, turbulence is widely regarded as exhibiting multiscale structure and long-range statistical dependence. Importantly, the same pipeline can be applied to arbitrary user-provided datasets, and the purpose of the map is to provide a dataset-centric diagnostic rather than a dataset-specific claim.

The $I_{\text{QCLI}}$ axis quantifies the extent of parity-structured, interference-like deviation from the binomial ideal baseline, while the $I_{\text{CCI}}$ axis quantifies the portion of dependence that cannot be explained by the optimal tree-structured (pairwise) model. The map provides an operational guide for identifying dataset regimes that are (or are not) structurally compatible with the inductive biases of IQP-type generators.

To contextualize the map with an IQP reference, we construct an empirical \emph{IQP envelope} by using the data points collected in Sec.~\ref{sec:high_qcli_high_cci} {using the approach provided in Appendix~\ref{app:envelope_CCM}}, that is, a data-driven boundary that approximates the \emph{upper-right reach} of this IQP family in the QCLI--CCI plane under our optimization protocol. We use this envelope as a visual reference for ``IQP-like'' structure regime rather than as a hard feasibility boundary.

{
We emphasize that the map is a decision-support diagnostic rather than a hardness certificate. Its axes are empirical statistics estimated from finite samples, and the shaded IQP region is a protocol-specific reference obtained from one optimized IQP family rather than a hard feasibility boundary. Points in the high-$I_{\mathrm{QCLI}}$/ high-$I_{\mathrm{CCI}}$ region are therefore not guaranteed to exhibit quantum advantage; instead, they identify candidate datasets which align with the inductive bias of the IQP model and which are, thus, worth testing.
}

In this map, datasets clustered near the origin (low $I_{\text{QCLI}}$, low $I_{\text{CCI}}$) exhibit predominantly low-order, classically expressible (by Chow-Liu tree) correlations, representative of structured yet ``classical'' generative tasks such as MNIST or simple geometric blobs. D-Wave quantum annealing samples \citep{scriva2023accelerating} with short anneal times with 100 qubits also fall in this region, reflecting behavior well-explained by local pairwise couplings. Moving upward along the $I_{\text{CCI}}$ axis identifies datasets with increasingly non-local, many-body dependence, such as Lorenz attractor \citep{tucker1999lorenz}, whose structure cannot be faithfully captured by tree-based models. Meanwhile, movement along the $I_{\text{QCLI}}$ axis highlights datasets whose Fourier–Walsh signatures contain quantum-like interference structure distinct from classical randomness, including RCS data from Googles experiments \citep{arute2019quantum}.

The (relatively) upper-right quadrant, high $I_{\text{QCLI}}$ and high $I_{\text{CCI}}$, marks the IQP-compatible regime, where datasets simultaneously exhibit strong parity-structured interference and high-order non-pairwise correlation dependence. The turbulence datasets (D1, D2) \citep{khojasteh2022lagrangian}, where D1 and D2 referring to different simulation domains, occupy this region, suggesting that capacity-limited IQP generators may exhibit improved representational alignment and sample/parameter efficiency relative to classical baselines in this domain, which we investigate in the following sections. This map therefore serves not only as an empirical diagnostic, but also as a principled guide for identifying problem classes where IQP-based quantum generative models are naturally well-aligned.

\paragraph{On the $I_{\text{QCLI}}$ magnitudes.}
We emphasize that the absolute $I_{\text{QCLI}}$ values in Fig.~\ref{fig:ccm} should be interpreted primarily \emph{relatively} within this dataset collection and with respect to the IQP reference envelope. Although $I_{\text{QCLI}}$ is bounded by $1$, the largest $I_{\text{QCLI}}$ values observed among the real and benchmark datasets plotted here are below $\approx 0.2$, substantially smaller than the values achievable by synthetic IQP-generated targets in our controlled studies (which can reach $\sim 0.7$). Importantly, ``moderate'' $I_{\text{QCLI}}$ values on this scale can already correspond to visibly structured spectra, for example, the D-Wave 100Q dataset at $100\,\mu$s attains $I_{\text{QCLI}}\approx 0.16$ and exhibits pronounced order-selective deviations from the binomial baseline in Fig.~\ref{fig:dwave_qcli_mk}.
This gap between typical real datasets and synthetic IQP extremes highlights a practical challenge: strongly IQP-aligned, interference-dominated structure may be rare in off-the-shelf datasets, and searching for IQP-compatible real-world distributions is therefore nontrivial. The purpose of the Correlation--Complexity Map is precisely to make this search operational by providing a calibrated, model-relevant coordinate system, so that progress can be tracked in terms of movement toward the IQP-compatible regime, and candidate domains can be prioritized even when absolute $I_{\text{QCLI}}$ values remain far from the theoretical maximum.

\begin{figure*}[ht]
\centering
\includegraphics[width=0.7\linewidth]{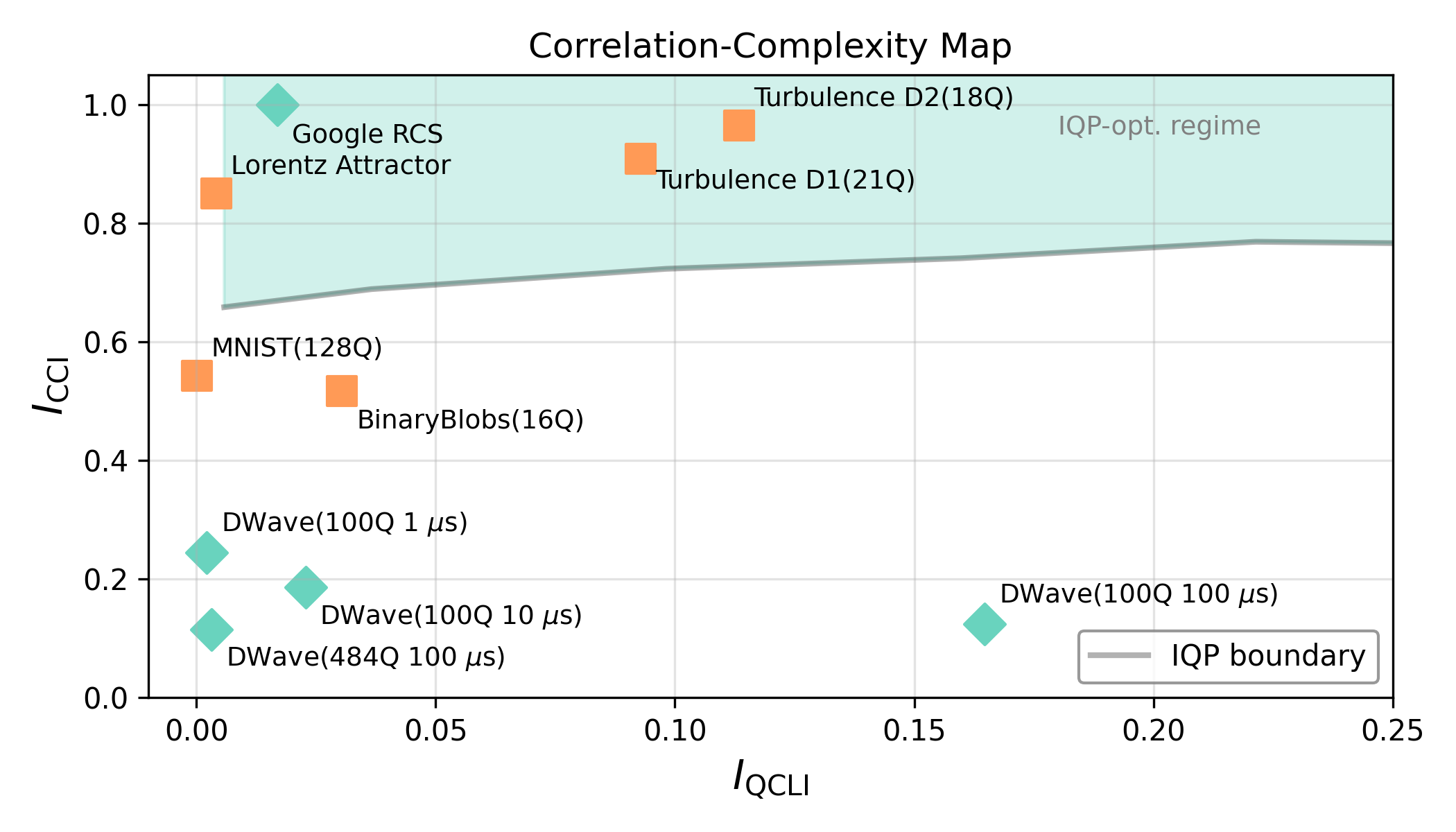}
\caption{Correlation–Complexity Map.
Each dataset is positioned by $(I_{\text{QCLI}}, I_{\text{CCI}})$, measuring (x-axis) parity-structured deviation of the correlation-order spectrum from an i.i.d. binomial baseline (QCLI) and (y-axis) the fraction of total dependence not captured by the optimal Chow--Liu tree ($I_{\text{CCI}}$). The shaded region shows a {protocol-specific} empirical IQP envelope obtained by optimizing $16$-qubit IQP circuits for high $I_{\text{QCLI}}$ and $I_{\text{CCI}}$. Datasets in the upper-right quadrant exhibit both interference-like spectral structure and beyond-pairwise dependence, forming the regime most structurally compatible with IQP-type generators. Point colors indicate data provenance: \textcolor{orange}{orange} denotes datasets produced by classical processes (e.g., simulations or classical benchmarks), whereas \textcolor{teal}{teal} denotes datasets generated by quantum processes or quantum hardware.}
 \label{fig:ccm}
\end{figure*}

Several representative datasets in Fig.~\ref{fig:ccm} have already been benchmarked with IQP generative models under the ``train on classical, deploy on quantum'' framework \citep{recio2025train}. In particular, the paper evaluates IQP generators against classical energy-based baselines Restricted Boltzmann Machine (RBM) and energy based model (EBM) on multiple binary datasets using test-set MMD diagnostics, reporting that IQP achieves the best MMD test scores in three of four experiments, and that for the large dataset D-Wave the IQP model clearly outperforms the tested classical baselines, which failed to produce convincing results.  We summarize the main qualitative outcomes in Table~\ref{tab:tocdoq-summary}.

\begin{table*}[t]
\centering
\caption{Summary of representative IQP generative modeling results reported in Ref.~\cite{recio2025train} for datasets (or similar classes) that also appear in our correlation--complexity map. We use this prior evidence to contextualize why we do not retrain IQP models for every dataset in Fig.~\ref{fig:ccm}, and to motivate focusing new experiments on the high-$I_{\mathrm{QCLI}}$/high-$I_{\mathrm{CCI}}$ turbulence regime.}
\small
\begin{ruledtabular}
\begin{tabular}{llll}
Dataset (size) & IQP setup & Baselines & Key outcome \\
\hline
BinaryBlobs (16 bits) &
\parbox[t]{3.0cm}{\raggedright Local IQP gates\\(up to 6-body),\\14892 params} &
\parbox[t]{2.2cm}{\raggedright EBM (1700 params),\\RBM, Bitflip} &
\parbox[t]{6.2cm}{\raggedright EBM achieves better likelihood than IQP (log-likelihood: EBM $-5.34$ vs.\ IQP $-6.35$); Bitflip performs poorly; MMD reveals sensitivity to mode imbalance.} \\
\hline
\parbox[t]{2.7cm}{\raggedright D-Wave annealer samples\\(484 qubits, $100\,\mu\mathrm{s}$)} &
\parbox[t]{3.0cm}{\raggedright All 1- and 2-qubit gates on 484 qubits,\\$>10^5$ params} &
\parbox[t]{2.2cm}{\raggedright RBM, EBM, Bitflip} &
\parbox[t]{6.2cm}{\raggedright No model fits perfectly, but IQP performs best among the compared models; RBM/EBM can score worse than random on test MMD; EBM fails in this setting.} \\
\hline
Binarized MNIST (784 bits) &
\parbox[t]{3.0cm}{\raggedright All-to-all 2-qubit gates,\\307{,}720 params} &
Bitflip only &
\parbox[t]{6.2cm}{\raggedright IQP captures substantial structure; covariance matches the overall pattern, albeit more weakly. Test MMD is comparable to a noisy-data reference (reported around noise level $p\approx 0.3$); Bitflip fails to learn comparable structure.} \\
\end{tabular}
\end{ruledtabular}
\label{tab:tocdoq-summary}
\end{table*}

These results provide an external consistency check for the correlation–complexity map, in regimes with substantial non-local structure (high beyond-pairwise dependence), classical training instabilities such as mode collapse/imbalance (RBM) or insufficient MCMC mixing (EBM) can dominate performance, whereas IQP training appears comparatively robust under the same compute budget. At the same time, \cite{recio2025train} explicitly cautions that their classical baselines are not state-of-the-art and that stronger classical results could plausibly be obtained with different hyperparameter choices, initialization strategies, or alternative model classes. This caveat is particularly relevant for interpreting the D-Wave point in Fig.~\ref{fig:ccm}: although \cite{recio2025train} observes IQP outperforming their tested classical baselines on this dataset, they also emphasize that stronger classical results may emerge with alternative baselines and tuning. From the correlation–complexity map perspective, the relatively low-to-moderate $I_{\text{CCI}}$ observed for certain D-Wave regimes suggests that a substantial portion of the dependency structure may remain explainable by classical pairwise/tree-based structure, making it plausible that well-tuned graphical models or other classical approaches could achieve competitive fits even when the IQP workflow performs strongly in a given benchmark. Accordingly, we treat the D-Wave result primarily as evidence that IQP can be competitive under the protocol, while the $I_{\text{CCI}}$ placement motivates a more exhaustive classical baseline search rather than an intrinsic advantage claim. 

While existing previous results help calibrate expectations for several datasets on the map, the most distinctive regime identified here, \emph{turbulence} in the high-$I_{\text{QCLI}}$/high-$I_{\text{CCI}}$ quadrant, remains untested; we therefore proceed to a dedicated IQP generative modeling study of turbulence.

\section{Generative Modeling of Turbulence via IQP Circuits}

To validate the hypothesis suggested by the correlation–complexity map, we train IQP-based generative models on turbulence snapshots and examine whether their performance reflects the dataset’s strong alignment with high-order, interference-compatible structure, where the dataset is the D2 dataset provided in \cite{khojasteh2022lagrangian}, with total 1000 turbulence snapshots with time label $t \in \{1,2,...,1000\}$.

To interface continuous-valued turbulence fields with IQP-based generative models, we convert each floating-point coordinate into a fixed-length binary representation via a simple uniform quantization-and-encoding scheme (Appendix~\ref{app:float2bit}, with some similarity of the encoding introduced in \cite{buhrman2001quantum}). Concretely, each $(x,y,z)\in\mathbb{R}^3$ sample is mapped to a bitstring in $\{0,1\}^{3N}$ by quantizing each coordinate to one of $2^N$ bins and then applying the standard $N$-bit binary encoding. In our experiments we set $N=6$, yielding an 18-bit dataset, so the IQP generator acts on $n=18$ qubits. The mapping is deterministic and efficiently invertible, allowing generated bitstrings to be decoded back into quantized turbulence coordinates for evaluation and visualization.

{
Specifically, the IQP circuit generates a distribution over quantized coordinate/value tuples rather than a full field represented as a single bitstring as in~\cite{recio2025train}. A synthetic snapshot is reconstructed by drawing many such tuples from the time-conditioned IQP distribution and aggregating them on the discretized grid. Thus, the qubit count scales with the number of bits used to represent a tuple, not with the number of grid points. This representation is appropriate for distributional field synthesis but should not be confused with direct generation of an entire high-dimensional field vector in one circuit shot.
}

To train IQP-based generative models without relying on quantum-hardware optimization, we adopt the \emph{train-on-classical, deploy-on-quantum} framework~\citep{recio2025train, coyle2020born, benedetti2019generative}. The key idea is to optimize circuit parameters on classical hardware using an objective that can be evaluated from classically tractable quantities for IQP circuits, and to use a quantum device only at deployment time for sampling. Full details are deferred to Appendix~\ref{app:tocdoq-details}.

\subsection{IQP-Based Generative Synthesis of Unseen Turbulence Snapshots}
\label{sec:latent-adaptation}
Using the train-on-classical scheme described above, we first fit an IQP distribution over quantized coordinate/value tuples for a single reference snapshot, at a fixed time index (e.g., $t=1$). 

The resulting circuit captures a compact latent representation of the {turbulence spatial} structure using only a small number of trainable parameters, far fewer than the {original number of pixels and possible combinations of pixel values} of the full turbulence field. However, a true generative model should be able to produce \emph{new} snapshots corresponding to unseen times. To achieve this, we design a latent-parameter adaptation scheme. {This enables a single shared IQP architecture, equipped with a time-dependent latent block, to define tuple distributions from which turbulence snapshots at different times are reconstructed by sampling and aggregation.}

\paragraph{Latent parameterization and snapshot adaptation.}
Let $\mathcal{D}_t=\{x^{(t)}_i\}_{i=1}^{N_t}$ denote the binarized dataset of the turbulence snapshot at time $t$, with empirical distribution $\hat p_t$. We model $\hat p_t$ using an IQP generator $q_{\theta}(x)$ induced by measuring the circuit
\begin{eqnarray}
&&U(\theta)=H^{\otimes n}\exp\!\Bigl(i\sum_{s\subseteq[n]}\theta_s Z_s\Bigr)H^{\otimes n}, \nonumber\\
&&x\sim q_{\theta}(x):=|\langle x|U(\theta)|0^n\rangle|^2.
\end{eqnarray}
We partition the parameters into a \emph{shared core} and a \emph{snapshot-specific latent block}:
\begin{equation}
\theta=\bigl(\theta_{\mathrm{core}},\,\theta_{\mathrm{lat}}\bigr),
\qquad \dim(\theta_{\mathrm{lat}})=d_{\mathrm{lat}}\ll \dim(\theta_{\mathrm{core}}).
\end{equation}

{Algorithm~\ref{alg:latent-adaptation} summarizes the latent-adaptation workflow,
including the initial core training step, sequential latent-only adaptation, and
interpolation-based synthesis at unseen times.}

\emph{Initialization at $t=1$.} For the initial snapshot (e.g.\ $t=1$), we draw $\theta_{\mathrm{lat}}^{(1)}$ from a random initialization and keep it \emph{fixed} during the first-stage training. We then fit only the core parameters $\theta_{\mathrm{core}}$ by minimizing the MMD-based loss (Appendix~\ref{app:tocdoq-details}) between the model distribution and the empirical target:
\begin{equation}
\theta_{\mathrm{core}}^{(1)} \in
\arg\min_{\theta_{\mathrm{core}}}
\mathcal{L}_{\text{MMD}}\!\bigl(q_{(\theta_{\mathrm{core}},\theta_{\mathrm{lat}}^{(1)})},\,\hat p_1\bigr),
\label{eq:fit_t1_core_only}
\end{equation}
using 500 samples from $q_{(\theta_{\mathrm{core}},\theta_{\mathrm{lat}}^{(1)})}$ to form an estimate of $\mathcal{L}_{\text{MMD}}$ at each optimization step, with a total of $3\times 10^{4}$ training steps and $N_t=5 \times 10^4$ training datapoints.

\begin{algorithm}[H]
\color{black}
\caption{Latent-parameter adaptation for turbulence synthesis}
\label{alg:latent-adaptation}
\begin{algorithmic}[1]
\State \textbf{Input:} Anchor snapshots $\{\hat p_t\}_{t\in\mathcal{T}_{\mathrm{train}}}$,
IQP model $q_{(\theta_{\mathrm{core}},\theta_{\mathrm{lat}})}$, latent dimension $d_{\mathrm{lat}}$
\State \textbf{Output:} Synthetic snapshot $\tilde y(\tau)$ at unseen time $\tau$
\State Choose an initial anchor time $t_1$ and initialize
$\theta_{\mathrm{lat}}^{(t_1)}$ randomly.
\State Train the shared core parameters by solving
\[
\theta_{\mathrm{core}}^{(1)}
\leftarrow
\arg\min_{\theta_{\mathrm{core}}}
\mathcal{L}_{\mathrm{MMD}}
\bigl(q_{(\theta_{\mathrm{core}},\theta_{\mathrm{lat}}^{(t_1)})},
\hat p_{t_1}\bigr),
\]
while keeping $\theta_{\mathrm{lat}}^{(t_1)}$ fixed.
\State Freeze $\theta_{\mathrm{core}}^{(1)}$.
\For{each later anchor time $t_j\in\mathcal{T}_{\mathrm{train}}$}
    \State Initialize $\theta_{\mathrm{lat}}^{(t_j)}$ from the previous anchor,
    $\theta_{\mathrm{lat}}^{(t_{j-1})}$.
    \State Adapt only the latent block:
    \[
    \theta_{\mathrm{lat}}^{(t_j)}
    \leftarrow
    \arg\min_{\theta_{\mathrm{lat}}}
    \mathcal{L}_{\mathrm{MMD}}
    \bigl(q_{(\theta_{\mathrm{core}}^{(1)},\theta_{\mathrm{lat}})},
    \hat p_{t_j}\bigr).
    \]
\EndFor
\State Interpolate the learned latent trajectory
$\{\theta_{\mathrm{lat}}^{(t_j)}\}$ to obtain
$\tilde\theta_{\mathrm{lat}}(\tau)$.
\State Sample bitstrings
$x\sim q_{(\theta_{\mathrm{core}}^{(1)},\tilde\theta_{\mathrm{lat}}(\tau))}$.
\State Decode sampled bitstrings into coordinate/value tuples and aggregate them
on the discretized grid to obtain $\tilde y(\tau)$.
\end{algorithmic}
\color{black}
\end{algorithm}

\emph{Sequential adaptation for $t\ge2$.} After this initial fit, we freeze the learned core parameters and reuse them across time steps in the dataset,
\begin{equation}
\theta_{\mathrm{core}} \leftarrow \theta_{\mathrm{core}}^{(1)} \quad \text{(fixed for all $t$)},
\end{equation}
and adapt only the latent block for each subsequent snapshot:
\begin{eqnarray}
&&\theta_{\mathrm{lat}}^{(t)} \in
\arg\min_{\theta_{\mathrm{lat}}}
\mathcal{L}_{\text{MMD}}\!\bigl(q_{(\theta_{\mathrm{core}}^{(1)},\theta_{\mathrm{lat}})},\,\hat p_t\bigr), \nonumber\\
&& \text{initialized at }\theta_{\mathrm{lat}}^{(t-1)},
\label{eq:fit_latent_t}
\end{eqnarray}
{with $3000$ optimization steps per adapted snapshot.}
This warm-started latent optimization exploits temporal continuity of the evolving system: adjacent snapshots correspond to similar target distributions $\hat p_t$, so a low-dimensional update in $\theta_{\mathrm{lat}}$ suffices to track the evolving state without retraining the full circuit. Repeating Eq.~(\ref{eq:fit_latent_t}) for $t=t_2,\dots,T$ yields a latent sequence $\{\theta_{\mathrm{lat}}^{(t)}\}_{t=1}^{T}$ while maintaining a single shared core $\theta_{\mathrm{core}}^{(1)}$.

\paragraph{Latent interpolation and generative synthesis of unseen times.}
The adapted latent parameters $\{\theta_{\mathrm{lat}}^{(t)}\}_{t=1}^{T}$ define a discrete trajectory in a low-dimensional subspace $\mathbb{R}^{d_{\mathrm{lat}}}$ of the IQP parameter space:
\begin{equation}
t \longmapsto \theta_{\mathrm{lat}}^{(t)} \in \mathbb{R}^{d_{\mathrm{lat}}},
\qquad \text{with shared core } \theta_{\mathrm{core}}^{(1)} \text{ fixed.}
\end{equation}
We view this trajectory as a compact representation of temporal evolution: the circuit architecture and core parameters encode the dominant spatial structure learned at $t=1$, while time variation is captured through the low-dimensional latent block.

To synthesize a snapshot at an \emph{unseen} (possibly continuous) time $\tau\in\mathbb{R}$, we construct an interpolated latent vector $\tilde\theta_{\mathrm{lat}}(\tau)$ from the learned anchors $\{\theta_{\mathrm{lat}}^{(t)}\}$ and evaluate the IQP generator at $(\theta_{\mathrm{core}}^{(1)},\tilde\theta_{\mathrm{lat}}(\tau))$. Concretely, for piecewise-linear interpolation, let $k=\lfloor \tau \rfloor$ and $\alpha=\tau-k\in[0,1]$ (assuming $1\le k < T$); we set
\begin{equation}
\tilde\theta_{\mathrm{lat}}(\tau)
=(1-\alpha)\,\theta_{\mathrm{lat}}^{(k)}+\alpha\,\theta_{\mathrm{lat}}^{(k+1)}.
\label{eq:latent_interp_linear}
\end{equation}
More generally, one may write $\tilde\theta_{\mathrm{lat}}(\tau)=\sum_{t=1}^{T} w_t(\tau)\theta_{\mathrm{lat}}^{(t)}$ with convex weights $w_t(\tau)\ge0$ and $\sum_t w_t(\tau)=1$, which also supports extrapolation by allowing non-convex weights when $\tau\notin[1,T]$.

Given $\tilde\theta_{\mathrm{lat}}(\tau)$, we sample bitstrings
\begin{equation}
x \sim q_{(\theta_{\mathrm{core}}^{(1)},\,\tilde\theta_{\mathrm{lat}}(\tau))}(x),
\end{equation}
and decode each sample back to a floating-point turbulence field via the inverse of our bitstring encoding (Appendix~\ref{app:float2bit}), yielding a synthetic snapshot $\tilde y(\tau)$ in the original physical domain. In practice, at each generated time $\tau$ we draw a fixed number of samples per synthesis call (e.g., $10^5$ samples) from the IQP generator and aggregate them according to the same probability-aggregation/decoding pipeline used during training. This defines a continuous-time generator
\begin{equation}
\tau \ \mapsto\  q_{(\theta_{\mathrm{core}}^{(1)},\,\tilde\theta_{\mathrm{lat}}(\tau))} \ \mapsto\  \tilde y(\tau),
\end{equation}
which produces temporally coherent turbulence fields while maintaining a fixed and compact IQP circuit structure. The overall workflow is illustrated in Fig.~\ref{fig:latent-flow}. The learned latent trajectory $\{\theta_{\mathrm{lat}}^{(t)}\}_{t=1}^{T}$ (and the corresponding interpolation $\tilde\theta_{\mathrm{lat}}(\tau)$) can be visualized by plotting each latent coordinate $\bigl(\theta_{\mathrm{lat}}^{(t)}\bigr)_j$ against the time index $t$, as shown in Fig.~\ref{fig:latent-traj}. Detailed experimental settings and hyperparameters for this visualization are provided in the next section.

\begin{figure*}[ht]
\centering
\includegraphics[width=0.9\linewidth]{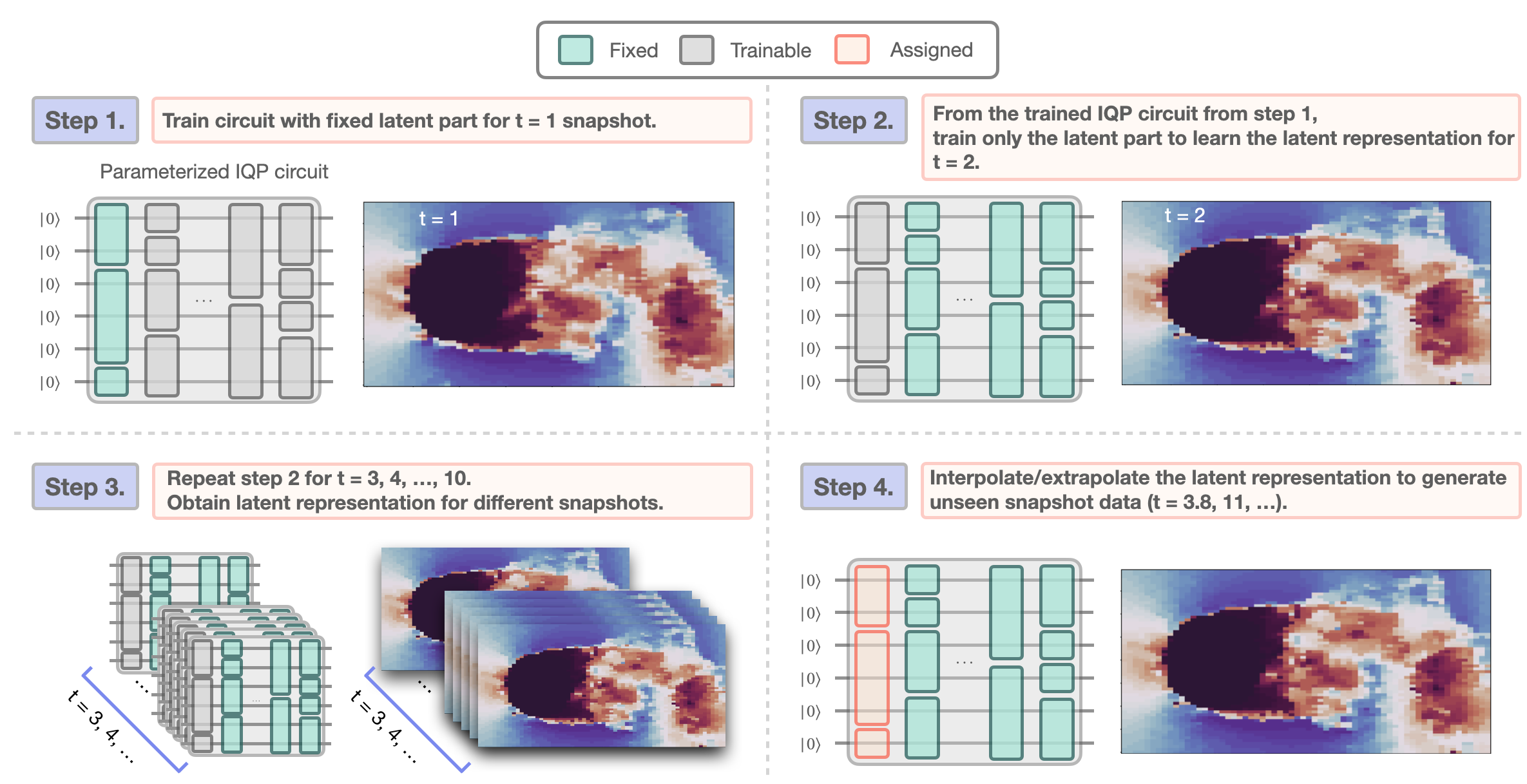}

\caption{Latent-parameter adaptation and interpolation for IQP-based generative synthesis of turbulence snapshots.
Step 1: Fit a single IQP generator to the reference snapshot at $t=1$ by optimizing the shared core parameters $\theta_{\mathrm{core}}$ while holding the latent block $\theta_{\mathrm{lat}}^{(1)}$ fixed at its random initialization. Step 2: Freeze $\theta_{\mathrm{core}}^{(1)}$ and adapt only the low-dimensional latent parameters $\theta_{\mathrm{lat}}$ to match the next snapshot $(t=2)$, yielding $\theta_{\mathrm{lat}}^{(2)}$. Step 3: Repeat the latent-only adaptation sequentially for $t=3,4,\ldots,T$ to obtain a latent trajectory $\{\theta_{\mathrm{lat}}^{(t)}\}_{t=1}^{T}$ while reusing the same fixed core circuit. Step 4: Generate snapshots at unseen times $\tau$ by assigning latent parameters via interpolation/extrapolation $\tilde{\theta}_{\mathrm{lat}}(\tau)$ (e.g., piecewise-linear interpolation between neighboring $\theta_{\mathrm{lat}}^{(t)})$, and sampling from $q_{(\theta_{\mathrm{core}}^{(1)},\tilde{\theta}_{\mathrm{lat}}(\tau))}$ followed by decoding to the physical turbulence field. The time indices $(e.g., t=2,3,\ldots,10)$ and turbulence snapshots shown are for illustration only; our experiments use a different subset of observed snapshots, with the same adaptation and interpolation procedure applied unchanged.}
 \label{fig:latent-flow}
\end{figure*}

\begin{figure}[ht]
\centering
\includegraphics[width=\linewidth]{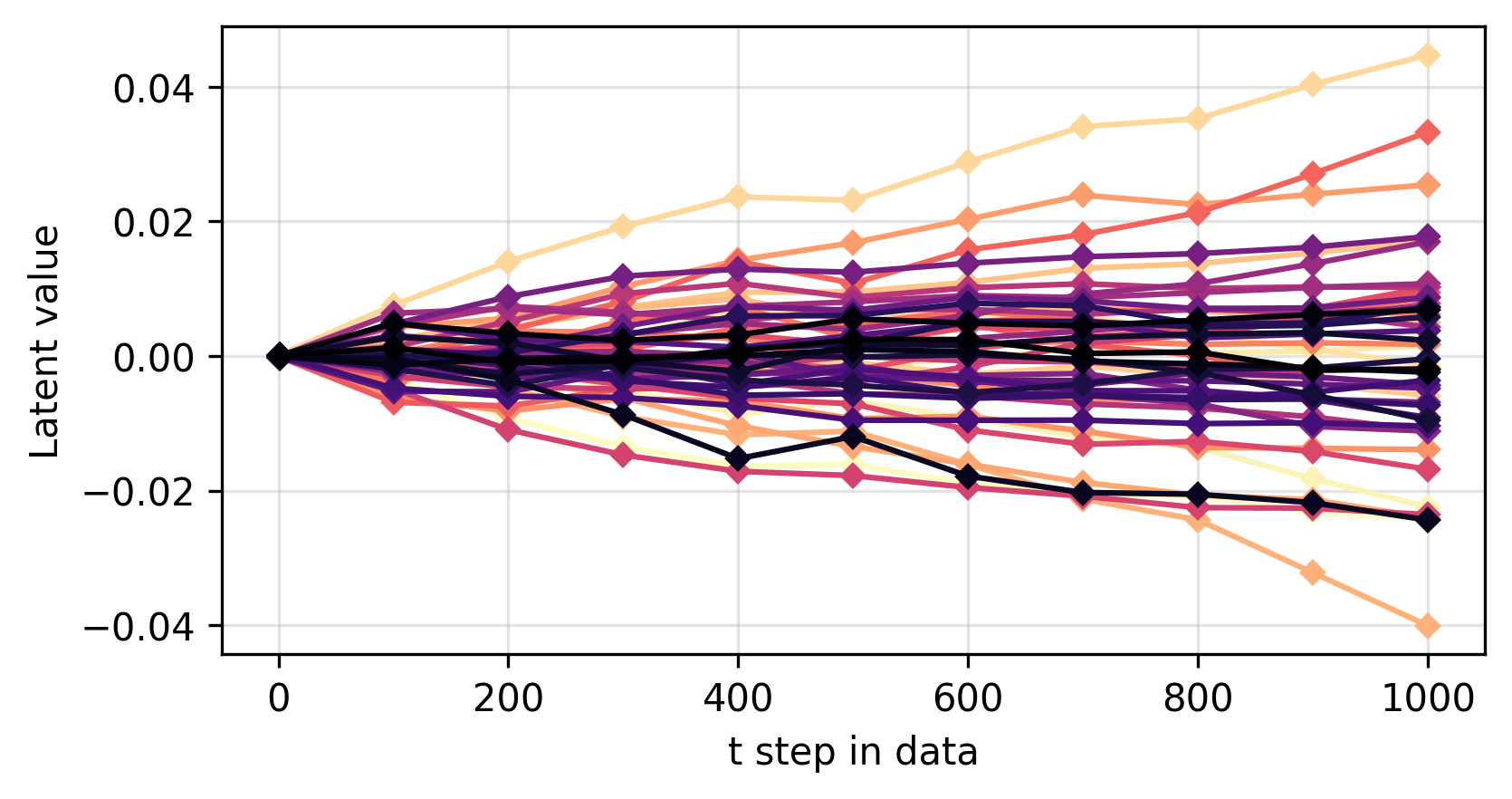}
\caption{Learned latent-parameter trajectory across snapshot adaptation steps.
Each curve shows one latent coordinate $\bigl(\theta_{\mathrm{lat}}^{(t)}\bigr)_j$ as a function of the snapshot index $t\in\{1,100,200,300,400,500,600,700,800,900,1000\}$, obtained by sequential latent-only adaptation while keeping the shared core parameters $\theta_{\mathrm{core}}^{(1)}$ fixed. All $d_{\mathrm{lat}}=50$ latent coordinates are plotted. The gradually varying evolution across $t$ supports viewing $\{\theta_{\mathrm{lat}}^{(t)}\}$ as a low-dimensional trajectory and this trajectory motivates interpolation/extrapolation $\tilde{\theta}_{\mathrm{lat}}(\tau)$ to synthesize turbulence snapshots at unseen times.}
 \label{fig:latent-traj}
\end{figure}

\subsection{Comparative Evaluation with Classical Generative Models}

To benchmark the generative capability of the proposed IQP-based method, we compare against two classical baselines: a RBM \citep{fischer2012introduction} and a deep convolutional GAN (DCGAN) \citep{radford2015unsupervised}. Each model is trained on turbulence data using the representation most natural for its architecture, while keeping the training data and evaluation protocol aligned to enable a meaningful comparison.

{
The baselines used here are intended to test two contrasting regimes: RBM as a bitstring-domain energy model trained on the same representation as IQP, and DCGAN as a continuous-domain convolutional generator with a strong spatial inductive bias. They are not exhaustive. In particular, diffusion models, autoregressive models, tensor-network distributions, coordinate-based neural fields, and reduced-order physical models could possibly provide stronger baselines. We therefore interpret the results as evidence of low-data competitiveness against the tested baselines, not as a comprehensive classical comparison.
}

\paragraph{Training protocol for IQP and RBM models.}
Following the latent-adaptation procedure in Sec.~\ref{sec:latent-adaptation}, the IQP circuit is trained on $11$ turbulence snapshots at time indices
\[
t \in \{1,\;100,\;200,\;300,\;400,\;500,\;600,\;700,\;800,\;900,\;1000\}.
\]
These snapshots serve as anchor points for learning a discrete latent trajectory $\{\theta_{\mathrm{lat}}^{(t)}\}$. After obtaining latent parameters at these anchor times, we uniformly sample $200$ additional time indices from $[1,1000]$ (excluding the training points) and generate the corresponding turbulence fields by interpolating the latent parameters, yielding a synthesized test set of $200$ snapshots.

For the RBM baseline, we use the \emph{same} $11$ snapshots as training data and train directly in the bitstring domain, i.e., on the identical binarized representation used for IQP training. This ensures that IQP and RBM observe the same information content and the same training cardinality.

\paragraph{Setting of latent dimension and parameter partition.}
For the IQP model, we set the latent dimension to $d_{\mathrm{lat}}=50$, i.e., $\theta_{\mathrm{lat}}\in\mathbb{R}^{50}$, and use the remaining parameters as the shared core $\theta_{\mathrm{core}}$. We define $\theta_{\mathrm{lat}}$ as the first $d_{\mathrm{lat}}$ elements of the canonical IQP parameter vector obtained by ordering the commuting Pauli strings by increasing Hamming weight,
\[
\{\emptyset\},\ \{0\},\ \{1\},\ldots,\ \{n-1\},\ \{0,1\},\ \{0,2\},\ldots,
\]
so that low-order (one- and two-body) terms appear before higher-order terms. Under this convention, the latent block is dominated by low-order commuting interactions, providing a compact set of degrees of freedom that can be adjusted across time, while the remaining parameters form $\theta_{\mathrm{core}}$ and are held fixed during snapshot adaptation.

For the RBM, we use an analogous \emph{latent dimension} for temporal interpolation by selecting a $d_{\mathrm{lat}}$-dimensional parameter subvector to vary across time while keeping the remaining RBM parameters fixed. Specifically, we take the first $d_{\mathrm{lat}}=50$ elements of a fixed, flattened parameter ordering (weights and biases) as the RBM latent block and treat the resulting latent trajectory identically to the IQP case when generating snapshots at unseen times. This choice does not impose an architectural constraint on the RBM; rather, it enforces a matched low-dimensional interpolation mechanism so that differences in performance reflect model inductive bias rather than differences in adaptation degrees of freedom.

\paragraph{Training protocol for DCGAN (continuous-domain model).}
Unlike RBM and IQP, DCGAN operates in the continuous image domain and leverages convolutional inductive biases for spatial structure. Accordingly, we train DCGAN directly on the \emph{quantized 2D turbulence fields} (rather than the bitstring representation), so that the GAN is not artificially constrained by our encoding pipeline.

We report two DCGAN regimes to disentangle \emph{model/architecture effects} from \emph{data availability effects}. 

(i) \textbf{Data-rich DCGAN:} we train DCGAN on $100$ turbulence snapshots (approximately $10\times$ the training set size used for IQP/RBM) to reflect standard GAN practice and to assess its performance given sufficient data. 

(ii) \textbf{Data-matched DCGAN:} we additionally train DCGAN on the same $11$ snapshots used by IQP/RBM to evaluate robustness under the low-data regime relevant to our setting. 
In both cases, DCGAN is trained on the same underlying turbulence distribution, and evaluation is performed on the same held-out protocol described below.

We choose the DCGAN latent dimension $128$ to be comparable to the effective latent capacity of the IQP model (in the sense that both define a relatively low-dimensional control variable used to generate diverse snapshots), while allowing the convolutional generator to represent spatial correlations in its native domain.

\paragraph{Quantitative and qualitative comparison.}
Tables~\ref{tab:turbulence-samples-100} presents qualitative samples together with two quantitative metrics: (i) the PDF--JS divergence (lower is better), and (ii) a feature-space $\text{MMD}$ computed using a random Conv2D encoder (lower is better). All metrics are evaluated by comparing $200$ ground truth turbulence snapshots against $200$ samples generated from each model. 
{PDF--JS probes the marginal distribution of field intensities and is insensitive to spatial arrangement. The random Conv2D feature MMD probes higher-order spatial statistics through a fixed nonlinear embedding, but it is not a substitute for physics-specific turbulence diagnostics such as energy spectra or structure functions. We therefore use these metrics as distributional proxies and leave a full physics-level turbulence validation to future work. 
}Details of these two score are shown in the Appendix~\ref{app:evaluation_metrics}.  

\begin{table*}[t]
  \centering
  \small
  \setlength{\tabcolsep}{6pt}
  \renewcommand{\arraystretch}{1.25}
  \begin{tabular}{l c c c}
    \toprule
    Model (Params.)(\# TD.) & Samples & PDF--JS $\downarrow$ & MMD$^2$ (Conv2D) $\downarrow$ \\
    \midrule

    \textbf{Real (---)(---)} &
    \raisebox{-0.5\height}{\includegraphics[width=0.4\linewidth]{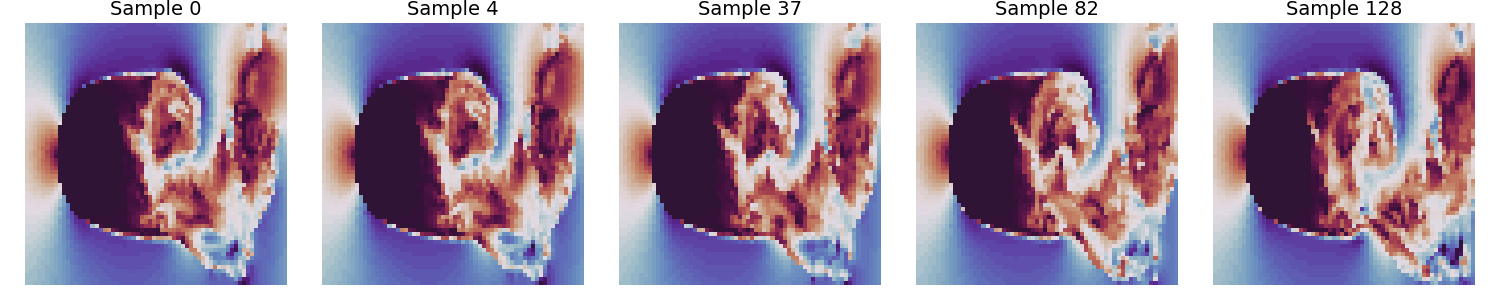}} &
    -- & -- \\

    \textbf{RBM (12k)(11)} &
    \raisebox{-0.5\height}{\includegraphics[width=0.4\linewidth]{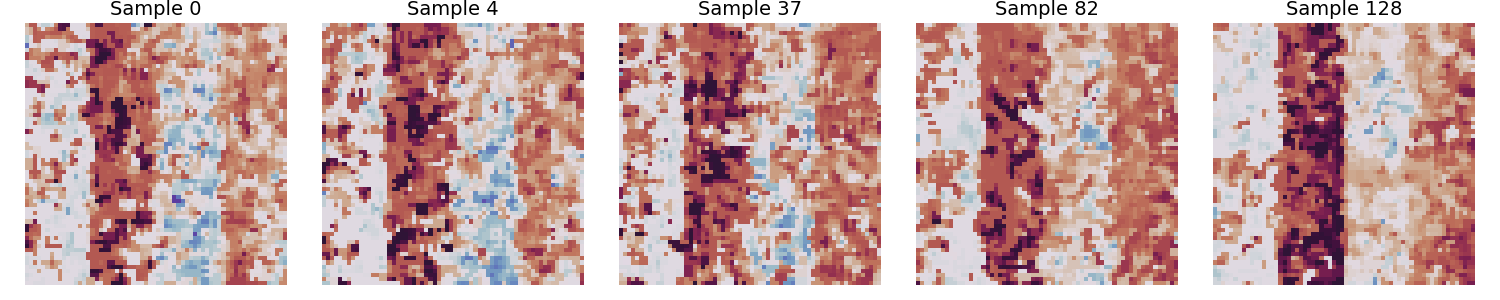}} &
    0.168770 & 1.472481 \\

    \textbf{IQP (31k)(11)} &
    \raisebox{-0.5\height}{\includegraphics[width=0.4\linewidth]{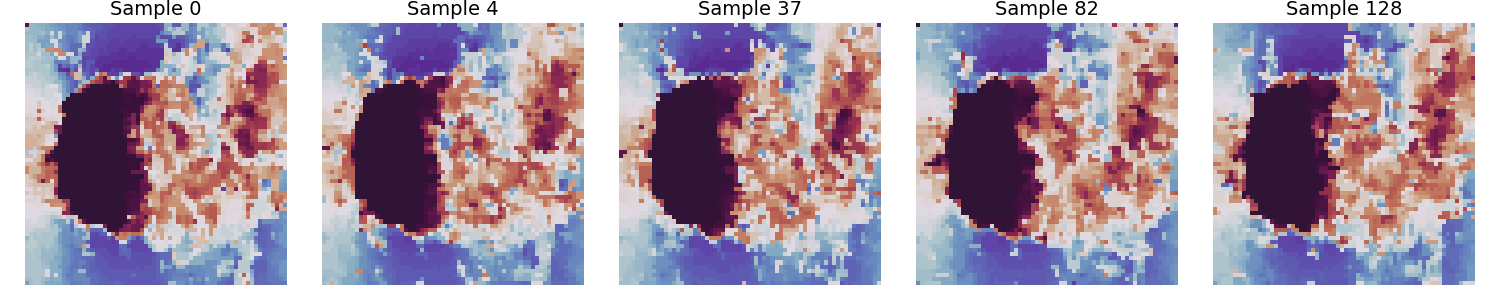}} &
    0.023784 & 0.952420 \\

    \textbf{IQP (63k)(11)} &
    \raisebox{-0.5\height}{\includegraphics[width=0.4\linewidth]{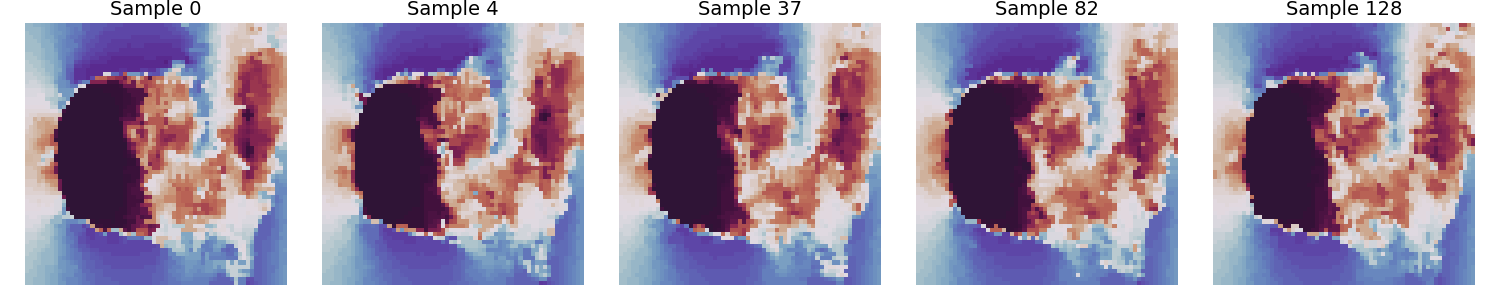}} &
    \textbf{0.022084}$^\dagger$ & \textbf{0.942267}$^\dagger$ \\

    \textbf{GAN (51k)(100)} &
    \raisebox{-0.5\height}{\includegraphics[width=0.4\linewidth]{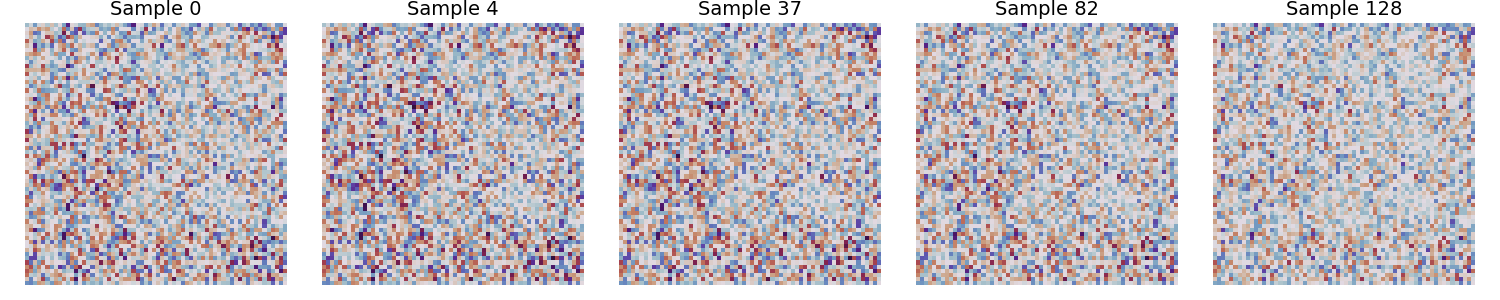}} &
    0.135584 & 1.327760 \\

    \textbf{GAN (0.2M)(100)} &
    \raisebox{-0.5\height}{\includegraphics[width=0.4\linewidth]{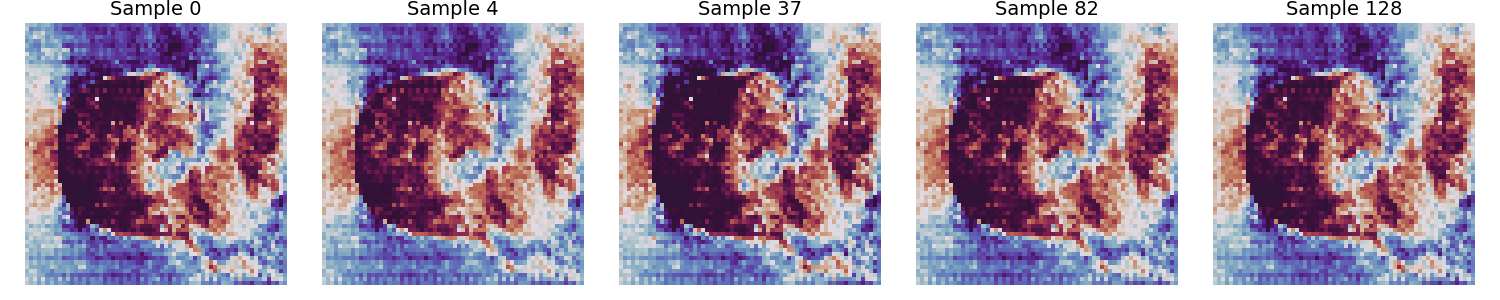}} &
    0.027695 & \textbf{0.895912} \\

    \textbf{GAN (1.2M)(100)} &
    \raisebox{-0.5\height}{\includegraphics[width=0.4\linewidth]{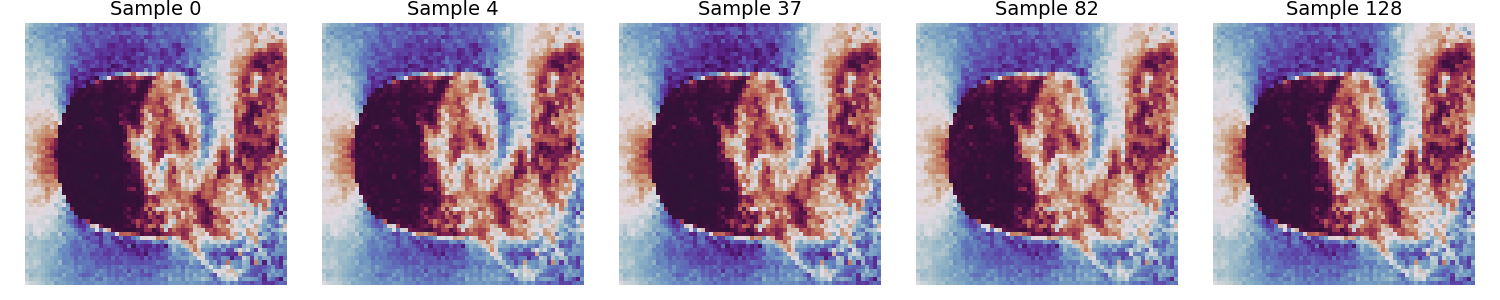}} &
    \textbf{0.012204} & 0.964874 \\

    \textbf{GAN (51k)(11)} &
    \raisebox{-0.5\height}{\includegraphics[width=0.4\linewidth]{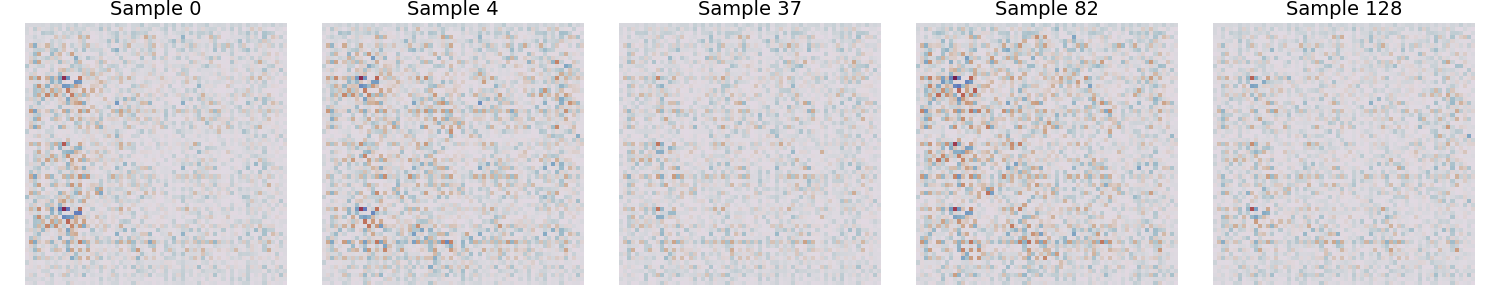}} &
    0.366758 & 1.397969 \\

    \textbf{GAN (0.2M)(11)} &
    \raisebox{-0.5\height}{\includegraphics[width=0.4\linewidth]{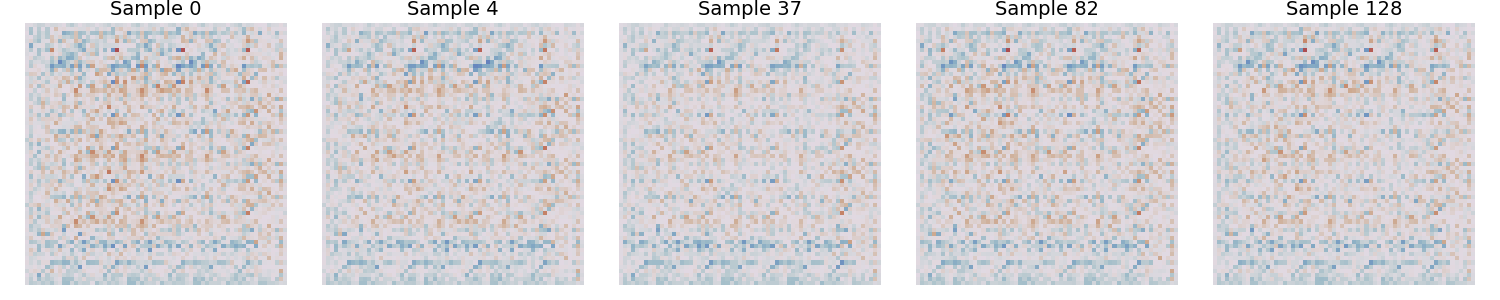}} &
    0.328273 & {1.407441} \\

    \textbf{GAN (1.2M)(11)} &
    \raisebox{-0.5\height}{\includegraphics[width=0.4\linewidth]{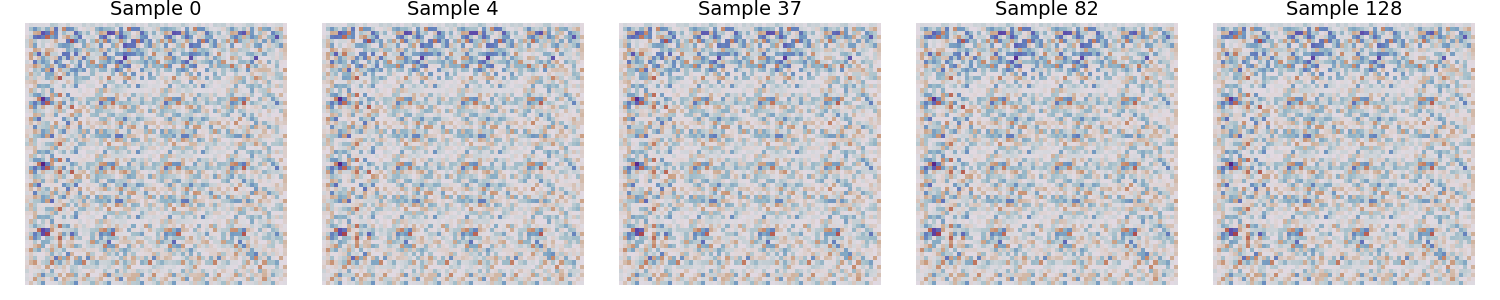}} &
    {0.244852} & 1.446292 \\

    \bottomrule
  \end{tabular}
  \caption{Qualitative samples (center) with corresponding PDF--JS and feature-space MMD$^2$ scores (right). Lower scores indicate better alignment with the real turbulence distribution. Models are shown with their parameter counts in parentheses.}
  \label{tab:turbulence-samples-100}
\end{table*}

\textbf{Data-matched regime (11 training snapshots).}
When DCGAN is trained on only $11$ snapshots, it fails to learn a meaningful generative distribution, samples exhibit severe artifacts consistent with under-training and/or instability, and both PDF--JS and feature $\text{MMD}$ degrade substantially relative to IQP. In contrast, the IQP models (31k and 63k parameters) remain stable and produce visually coherent turbulence-like fields with markedly better PDF--JS and $\text{MMD}$ than both RBM and DCGAN in this low-data setting. This supports the interpretation that IQP inductive biases are well-aligned with turbulence structure and can be leveraged in a sample-limited regime.

\textbf{Data-rich regime (100 training snapshots).}
With $100$ training snapshots, DCGAN improves significantly, and high-capacity variants can match or surpass IQP on one or both metrics. Importantly, this gain requires substantially more training data (about $10\times$) and, for the strongest variants, substantially more parameters (up to the $1.2$M setting). By contrast, the IQP models achieve competitive performance while being trained on only $11$ snapshots and using a fixed, compact circuit with latent adaptation. 

These results highlight a regime-dependent trade-off, DCGAN can perform very well when provided with sufficient data and capacity, but is brittle in the extremely low-data regime, whereas the IQP latent-adaptation scheme remains effective and sample-efficient on turbulence. This empirical behavior is consistent with turbulence occupying a comparatively IQP-compatible region (high-$I_{\text{QCLI}}$/high-$I_{\text{CCI}}$) of the correlation--complexity map.

\paragraph{Data scaling of DCGAN relative to IQP baselines.}
To quantify the data requirement for DCGAN to reach IQP-level performance, we perform a data-scaling study in which DCGAN is trained on varying numbers of turbulence snapshots, while evaluation is kept fixed. Specifically, we train DCGAN with three capacities (51k, 0.2M, and 1.2M parameters) using $N_{\text{train}}\in\{10,30,50,70,90,100\}$ snapshots and evaluate each model using the same protocol as above (PDF–JS and Conv2D feature-space MMD on a held-out set). Fig.~\ref{fig:gan_data_scaling} compares these results to the IQP baselines (horizontal dashed lines), where the IQP curves correspond to the $N_{\text{train}}=11$ latent-adaptation setting. As expected, DCGAN performance improves with additional training data and model capacity, but the trend is not strictly monotone, especially for the intermediate-capacity GAN, reflecting the well-known instability of GAN training in the low-data regime \citep{karras2020training, zhao2020differentiable}. Importantly, matching the IQP’s PDF–JS level typically requires on the order of $10^2$ training snapshots for the high-capacity DCGAN (1.2M), whereas small DCGAN variants remain substantially worse even at $N_{\text{train}}=100$. On the feature-space MMD metric, DCGAN also narrows the gap as data increases, approaching the IQP reference only at the highest data budgets. Overall, this scaling behavior reinforces the training sample-efficiency advantage of the IQP latent-adaptation scheme on turbulence, where IQP achieves competitive distributional alignment using only 11 training snapshots, while DCGAN generally requires an order-of-magnitude more data to reach comparable scores.

\begin{figure*}[ht]
\centering
\includegraphics[width=0.8\linewidth]{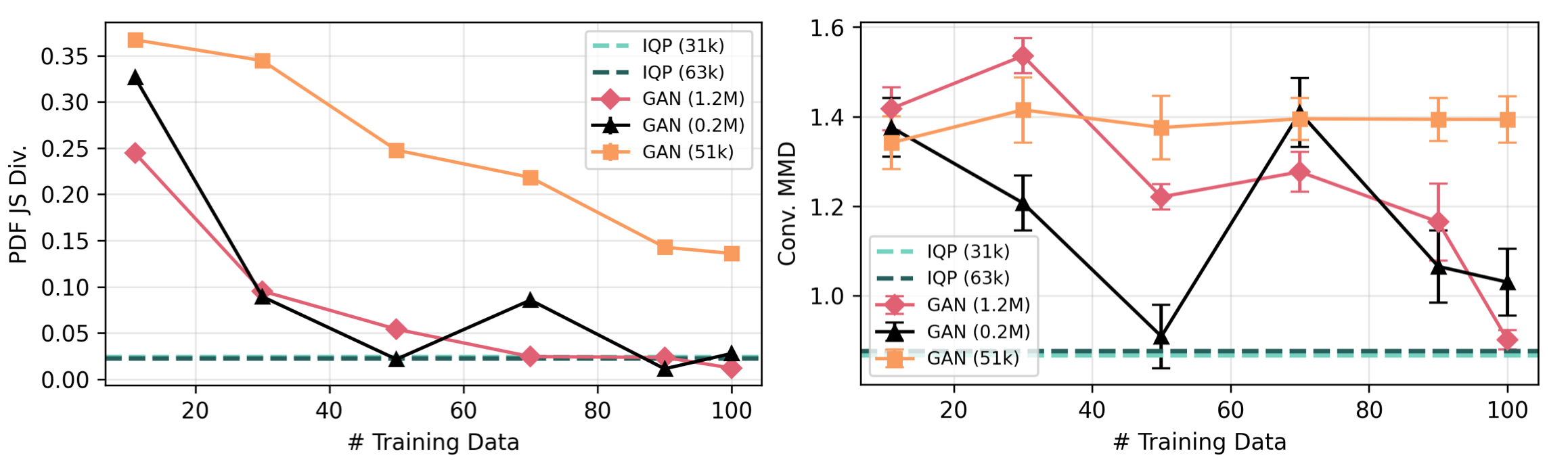}
\caption{DCGAN data-scaling versus IQP baselines on turbulence.
Left: PDF--JS divergence; right: Conv2D feature-space MMD (lower is better), both evaluated by comparing 200 real snapshots to 200 generated samples. Solid lines show DCGAN performance as a function of the number of training snapshots $N_{\text{train}}\in\{10,30,50,70,90,100\}$ for three model sizes (51k, 0.2M, 1.2M parameters); error bars indicate variability across independent runs. Horizontal dashed lines denote the IQP reference performance under the latent-adaptation setting trained on 11 snapshots (the same setting used in Tables~\ref{tab:turbulence-samples-100}).}

 \label{fig:gan_data_scaling}
\end{figure*}

\subsection{On Qubit Count Efficiency and Practicality of Quantum Generative Methods}

A central practical challenge for generative quantum models is the large number of qubits required to represent high-dimensional datasets. In the original formulation with \cite{recio2025train, benedetti2019generative}, the quantum circuit directly models the bitstring representation of the data. For datasets such as MNIST, where each image consists of $28\times 28 = 784$ pixels, this implies that inference requires a $784$-qubit quantum processor. While this is already beyond the scale of near-term quantum hardware, the limitation becomes even more severe for scientific datasets.

In our turbulence experiment, each three-dimensional snapshot has spatial resolution $64\times 64\times 64 = 262{,}144$ grid points. Under the original formulation, faithfully learning or generating such a sample would therefore require a quantum circuit acting on $262{,}144$ qubits, placing this task entirely outside the reach of foreseeable quantum devices. Even though this allows the same trained circuit to generate new unseen samples without retraining, the qubit requirement remains fundamentally tied to the native dimensionality of the data.

In contrast, our latent interpolation approach replaces the full high-dimensional representation with a compact \emph{18-bit} encoding obtained via the float-to-bitstring mapping described earlier. Each turbulence sample is mapped to a bitstring set of length $18$, and consequently our IQP model operates on an $18$-qubit circuit. Crucially, the temporal variability of the turbulence field is captured not by increasing the number of qubits, but through the latent-parameter interpolation scheme introduced in Section~\ref{sec:latent-adaptation}. A single 18-qubit IQP circuit, equipped with a time-informed latent parameter block, can generate an entire family of turbulence snapshots across time.

Thus, while the \cite{recio2025train} framework scales in qubit count proportional to the raw dimensionality of the dataset, our float-to-bitstring mapping and latent interpolation methods decouple generative expressivity from input dimensionality. This enables the synthesis of complex, high-resolution scientific fields using quantum circuits of fixed, low qubit count, a key step toward making generative quantum modeling practical on near-term hardware.

{Other existing quantum generative approaches span several modeling philosophies. Energy-based formulations such as Quantum Boltzmann Machines (QBMs) \citep{amin2018quantum, benedetti2017quantum} aim to represent complex distributions via quantum-native energy landscapes (or quantum-assisted sampling), inheriting both the expressive promise and the training difficulty of Boltzmann-style learning.
In contrast, quantum generative adversarial learning and its variants \citep{zoufal2019quantum, lloyd2018quantum}, replace explicit likelihood/partition functions with discriminator, driven training signals, but can be sensitive to optimization instability and data-hungriness, especially in high-dimensional regimes.}

\section{Impact of the Proposed Indicators and Methods}
\label{sec:impact}

A recurring bottleneck in the pursuit of \emph{useful} (and potentially advantageous) generative QML is not only \emph{how} to train classically hard quantum generators at scale, but \emph{when} doing so is structurally well-motivated for a given real-world dataset. The contributions of this work are intended to operationalize that second question: given a dataset, we provide two computable indicators, $I_{\text{QCLI}}$ and $I_{\text{CCI}}$, that jointly estimate (i) \emph{architecture--data alignment} with IQP-type parity/interference structure and (ii) \emph{beyond-pairwise} dependence in the classical correlation structure. Together, they turn the search for generative quantum utility from an ad hoc benchmark exercise into a more principled, data-driven screening procedure.

From a practitioner’s perspective, the correlation--complexity map can be viewed as a lightweight ``pre-check'' before committing to expensive model development. For a candidate dataset (or dataset transform), one can:
(i) compute $I_{\text{QCLI}}$ to quantify whether the data exhibits parity-structured spectral mass that is naturally expressible by an IQP generator;
(ii) compute $I_{\text{CCI}}$ to quantify whether the dataset’s dependence structure substantially exceeds what is captured by pairwise/tree-structured models; and
(iii) compare the resulting point to empirical IQP-occupied regions in the $(I_{\text{QCLI}}, I_{\text{CCI}})$ plane.
This workflow supports two actionable outcomes: it can \emph{de-prioritize} datasets that are simultaneously low-$I_{\text{QCLI}}$ and low-$I_{\text{CCI}}$ (where classical low-order models are typically sufficient), and it can \emph{prioritize} datasets that lie in high-$I_{\text{QCLI}}$/high-$I_{\text{CCI}}$ regimes where IQP inductive biases are expected to be most relevant.

A key value of $I_{\text{QCLI}}$ is that it is not merely a descriptive statistic, it is explicitly designed to probe the same representational mechanism that appears in the support-mismatch analysis (via Walsh--Fourier/parity support). While {the support-mismatch analysis} provides a lower-bound-style statement (and therefore does not directly certify achievable training performance under finite optimization budgets), the empirical probing results show that $I_{\text{QCLI}}$ meaningfully predicts \emph{regime structure} in the observed MMD outcomes. In particular, the envelope trends seen under controlled experiments are consistent with the interpretation that higher $I_{\text{QCLI}}$ constrains the extent of spectral leakage outside the learner’s accessible support, thereby limiting worst-case approximation mismatch even when optimization noise and sampling error are present.

The $I_{\text{CCI}}$ axis is included to capture an orthogonal but practically important aspect, even if a dataset is IQP-compatible in the parity/spectral sense, it may still be \emph{classically easy} if its dependence structure is largely pairwise (or well-approximated by a tree model). By introducing $I_{\text{CCI}}$ and empirically observing its coupling with high-$I_{\text{QCLI}}$ IQP outputs, the map provides a more nuanced structural target: regimes that are simultaneously IQP-aligned \emph{and} correlation-complex. Conceptually, this clarifies why ``quantum-likeness'' alone is not the right proxy for utility, the most compelling targets for heuristic quantum utility are those whose structure combines IQP-compatible interference patterns with substantial beyond-pairwise dependence.

A second practical impact of this work is methodological, the float-to-bitstring mapping and the latent-parameter adaptation mechanism broaden the scope of \cite{recio2025train}-style workflows beyond datasets that are natively binary and low-dimensional. In particular, the turbulence study illustrates a concrete use case where (i) the raw data are high-dimensional continuous fields, (ii) direct bitstring modeling would imply an infeasible qubit count, yet (iii) compact encodings and latent adaptation enable generation with a fixed, small-$n$ IQP circuit. This supports a pragmatic interpretation of ``utility'' in the near term, not a worst-case complexity-theoretic separation, but a demonstration that classically hard circuit families can serve as effective inductive-bias models for nontrivial scientific data \emph{under realistic resource constraints}.

\color{black}
\paragraph*{Limitations}
\label{sec:limitations}

We emphasize that the proposed indicators are \emph{diagnostic} rather than \emph{certifying}. High $(I_{\mathrm{QCLI}}, I_{\mathrm{CCI}})$ suggests a promising regime for IQP inductive-bias matching, but it does not certify quantum advantage, prove that high-$I_{\mathrm{QCLI}}$ datasets are always easy for IQP models to learn, or rule out the existence of stronger classical baselines, improved training heuristics, and alternative data representations that could shift a dataset's effective position on the map. Likewise, low $I_{\mathrm{QCLI}}$ does not imply that IQP models cannot learn; it indicates only that the specific parity-structured support-mismatch mechanism emphasized here is unlikely to be the dominant explanation for success.

The turbulence experiments should therefore be interpreted as evidence of low-data competitiveness against the tested baselines, not as a hardware sampling advantage or as a claim that turbulence is intrinsically classically hard to sample. In this sense, the role of the correlation--complexity framework is to make the dataset-search problem more systematic and falsifiable: it provides measurable hypotheses about \emph{where} IQP generators may be competitive, which can then be tested against stronger baselines, alternative representations, and eventually hardware-level sampling constraints.
\color{black}

\section{Conclusion}
\label{sec:conclusion}

This work takes a step toward making the search for generative quantum \emph{utility} more systematic, rather than treating dataset choice as a collection of isolated benchmarks, we advocate diagnosing dataset structure through indicator-driven screening and then testing the resulting alignment hypotheses with scalable IQP training protocols. In particular, the turbulence case study illustrates that, under a compact encoding and latent-parameter adaptation, a fixed low-qubit IQP generator can achieve competitive distributional alignment in an extreme low-data regime where standard GAN training becomes unstable.

Several directions remain open. A first priority is a broader and more rigorous baseline suite for datasets that appear IQP-compatible on the correlation--complexity map, including stronger classical graphical models, diffusion/score-based generators, and representation-learning pipelines, to better describe when IQP inductive bias provides unique value in practice, beyond complexity-theoretic hardness arguments (e.g., PH-collapse implications), and to identify the concrete data and resource regimes where it yields measurable gains. 

Second, the indicators themselves can be extended and stress-tested: e.g., evaluating sensitivity to quantization/binarization choices, designing representation-search procedures that explicitly optimize $(I_{\text{QCLI}}, I_{\text{CCI}})$, and developing conditional or multiscale variants tailored to structured scientific fields. 

Third, the map can be extended to include a tensor-network-oriented notion of classical tractability, since tensor-network simulation is a frequent and strong comparator for quantum generative workflows. A tensor-network axis would refine the diagnostic by distinguishing ``hard for our current baselines” from “hard for the best-known structured classical simulators'', and would guide more principled baseline design in future studies.

Finally, it will be important to close the loop toward deployment by incorporating hardware-relevant constraints (finite shots, noise, connectivity) and validating whether the predicted IQP-compatible regimes persist when 
 is performed on real quantum devices.

\begin{acknowledgments}
We are grateful to Jacob Swain, Marco Ballarin, Stephen Clark, and Harry Buhrman for helpful discussions.
\end{acknowledgments}

\bibliographystyle{apsrev4-2}
\bibliography{ref}
% \bibliography{iclr2025_conference, bib/qml, bib/qt}

\clearpage
\onecolumngrid
\appendix

\section{Derivation detail}
\label{sec:ddt_1}
The measurement outcome of IQP circuit in the computational ($Z$) basis is
\begin{eqnarray}
p(z)
&=& \bigl|\langle z|U|0\rangle^{\otimes n}\bigr|^2 \\
&=& \bigl|\langle z| H^{\otimes n}
\Bigl(
    \exp\!\bigl(i\!\!\sum_{s\subseteq [n]} \theta_s Z_s\bigr)
\Bigr)
H^{\otimes n} |0\rangle^{\otimes n}\bigr|^2   \\
&=& \left|\left( 2^{-n/2} \sum_{y \in \{0,1 \}} (-1)^{z\cdot y} \langle y|  \right) \Bigl(
    \exp\!\bigl(i\!\!\sum_{s\subseteq [n]} \theta_s Z_s\bigr)
\Bigr) \left(2^{-n/2} \sum_{x\in \{0,1 \} }|x\rangle \right) \right|^2 \\
&=& \left|2^{-n} \sum_{y,x}(-1)^{z \cdot y} \delta_{y,x} \exp(i \sum_{s \subseteq[n]} \theta_s \chi_s(x) ) \right|^2\\
&=&
\left|
    \frac{1}{2^n}
    \sum_{x\in\{0,1\}^n}
        (-1)^{z\cdot x}
        e^{i\sum_s \theta_s \chi_s(x)}
\right|^2,
\end{eqnarray}
\subsection{JS divergence approximates the \texorpdfstring{$\ell_2$}{} distance.}
\label{app:JS_distance}
Let $b=(b_0,\ldots,b_n)$ be the binomial reference on orders, and let
\begin{equation}
m = b + \delta,\qquad \sum_{k=0}^n \delta_k = 0,\qquad |\delta_k|\ll b_k.
\end{equation}
Define $M=\tfrac12(m+b)=b+\tfrac12\delta$.
Write the (base-2) entropy as $H_2(p) = -\sum_k p_k \log_2 p_k$.
Then the Jensen–Shannon divergence (in bits) is
\begin{equation}
D_{\mathrm{JS}}(m\Vert b)
= H_2(M) - \tfrac12 H_2(m) - \tfrac12 H_2(b).
\end{equation}
We expand $H_2(\cdot)$ to second order in $\delta$. It’s convenient to do the algebra with natural logs and divide by $\ln 2$ at the end:
\begin{equation}
H(p) = -\sum_k p_k \ln p_k, \qquad H_2(p)=\frac{H(p)}{\ln 2}.
\end{equation}
Use $\ln(b_k+\varepsilon)=\ln b_k + \frac{\varepsilon}{b_k} - \frac{\varepsilon^2}{2 b_k^2} + O(\varepsilon^3$). Then
\begin{eqnarray}
\begin{aligned}
H(m)
&= -\sum_k (b_k+\delta_k)\Bigl(\ln b_k + \frac{\delta_k}{b_k} - \frac{\delta_k^2}{2 b_k^2}\Bigr) + O(\|\delta\|^3)\\
&= -\sum_k b_k\ln b_k \;-\; \sum_k \delta_k \ln b_k \;-\; \sum_k \frac{\delta_k^2}{b_k} \;+\;\frac12\sum_k \frac{\delta_k^2}{b_k} \;+\; O(\|\delta\|^3)\\
&= H(b)\;-\;\sum_k \delta_k \ln b_k \;-\;\frac12\sum_k \frac{\delta_k^2}{b_k} \;+\; O(\|\delta\|^3).
\end{aligned}
\end{eqnarray}
Similarly, with $M=b+\tfrac12\delta$,
\begin{eqnarray}
\begin{aligned}
H(M)
&= -\sum_k \Bigl(b_k+\frac{\delta_k}{2}\Bigr)\Bigl(\ln b_k + \frac{\delta_k}{2 b_k} - \frac{\delta_k^2}{8 b_k^2}\Bigr) + O(\|\delta\|^3)\\
&= H(b)\;-\;\frac12\sum_k \delta_k \ln b_k \;-\;\frac18\sum_k \frac{\delta_k^2}{b_k} \;+\; O(\|\delta\|^3).
\end{aligned}
\end{eqnarray}
Plug into the JS definition (with natural logs first):
\begin{eqnarray}
\begin{aligned}
D_{\mathrm{JS}}^{(\mathrm{nat})}(m\Vert b)
&= H(M) - \tfrac12 H(m) - \tfrac12 H(b)\\
&= \Bigl[H(b) - \tfrac12\sum_k \delta_k \ln b_k - \tfrac18\sum_k \frac{\delta_k^2}{b_k}\Bigr]
	-	\tfrac12\Bigl[H(b) - \sum_k \delta_k \ln b_k - \tfrac12 \sum_k \frac{\delta_k^2}{b_k}\Bigr]
	-	\tfrac12 H(b) + O(\|\delta\|^3)\\[2pt]
&= \cancel{H(b)} - \tfrac12\sum_k \delta_k \ln b_k - \tfrac18\sum_k \frac{\delta_k^2}{b_k}
	-	\tfrac12\cancel{H(b)} + \tfrac12\sum_k \delta_k \ln b_k + \tfrac14 \sum_k \frac{\delta_k^2}{b_k}
	-	\tfrac12\cancel{H(b)} + O(\|\delta\|^3)\\[2pt]
&= \Bigl(\tfrac14 - \tfrac18\Bigr)\sum_k \frac{\delta_k^2}{b_k} + O(\|\delta\|^3)
= \frac18 \sum_k \frac{\delta_k^2}{b_k} + O(\|\delta\|^3).
\end{aligned}
\end{eqnarray}
Finally convert to bits:
\begin{equation}
D_{\mathrm{JS}}(m\Vert b) \;\approx\; \frac{1}{8\ln 2}\sum_{k=0}^n \frac{(m_k - b_k)^2}{b_k},
\end{equation}
i.e., to second order the JS divergence is a \emph{weighted} $\ell_2$ distance around the binomial baseline.

\color{black}
\section{Proof of Proposition~\ref{prop:walsh-support-mmd-floor}}
\label{app:proof-of-qcli-mmd}
\color{black}

\begin{proof}[Proof.]
{We first prove the support-mismatch lower bound in Proposition~\ref{prop:walsh-support-mmd-floor} by decomposing the MMD loss into an irreducible architecture-mismatched part and a trainable supported part. We then show how the $I_{\mathrm{QCLI}}$-dependent bound follows after applying Assumption~\ref{assump:qcli-support}.
}

\medskip
\noindent\textbf{MMD in the parity / Walsh basis.}
Let $p$ be the data distribution and $q_\theta$ the distribution
produced by an IQP circuit with parameters $\theta$. For a positive
kernel $k(\cdot,\cdot)$ (e.g.\ Gaussian) on bitstrings, the squared
MMD loss can be written as
\begin{equation}
\mathcal{L}_{\text{MMD}}(p,q_\theta)
=
\sum_{S\subseteq[n]}
c_S
\bigl(
\mathbb{E}_p[\chi_S]
-
\mathbb{E}_{q_\theta}[\chi_S]
\bigr)^2,
\label{eq:mmd-exp}
\end{equation}
for some nonnegative coefficients $c_S>0$ determined by the kernel and the chosen parity basis. \footnote{This is the standard Pauli / Walsh decomposition of MMD, analogous to Eq.~(\ref{eq:mmd-pauli}) in the main text where $\chi_S$ corresponds to $Z_S$.}

Here
\[
\mathbb{E}_p[\chi_S]
=
\sum_x p(x)\,\chi_S(x),
\qquad
\chi_S(x)=(-1)^{\oplus_{j\in S}x_j}.
\]

\medskip
\noindent\textbf{IQP architecture and supported frequencies.}
By construction, the order-$d$ IQP architecture induces a restricted
Walsh--Fourier support:
only a subset of parities $S_A\subseteq 2^{[n]}$ can be tuned by
varying the parameters $\theta$. Concretely, $S_A$ is determined by
the set of commuting Pauli generators $Z_s$ appearing in
\[
U(\theta)
=
H^{\otimes n}
\exp\!\Bigl(i\sum_{s\subseteq[n]}\theta_s Z_s\Bigr)
H^{\otimes n},
\]
which in turn fixes which $\chi_S$ have nontrivial, $\theta$-dependent
expectation values under $q_\theta$.

For $S\notin S_A$, the architecture does \emph{not} provide trainable
degrees of freedom to match the data parity $\mathbb{E}_p[\chi_S]$.
Without loss of generality, we absorb any architecture-fixed baseline
into the kernel coefficients and assume that
\[
\mathbb{E}_{q_\theta}[\chi_S] = 0
\quad\text{for all }S\notin S_A.
\]
(Any constant offset would only change $c_S$ by a fixed factor and
does not affect the scaling argument below.)

Plugging this into Eq.~(\ref{eq:mmd-exp}) yields the decomposition
\begin{align}
\mathcal{L}_{\text{MMD}}(p,q_\theta)
&=
\underbrace{\sum_{S\notin S_A}
c_S\bigl(\mathbb{E}_p[\chi_S]\bigr)^2}_{\text{irreducible error}}
+
\underbrace{\sum_{S\in S_A}
c_S\bigl(\mathbb{E}_p[\chi_S]
-
\mathbb{E}_{q_\theta}[\chi_S]\bigr)^2}_{\text{trainable part}}.
\label{eq:mmd-irreducible-trainable}
\end{align}
The first term is independent of $\theta$ and cannot be reduced by
training; only the second term can be driven down by optimization.

\medskip
\noindent\textbf{Relating the irreducible part to Fourier power.}
Recall the Walsh--Fourier power definition from Sec.~\ref{sec:codhoc}:
\[
P(S)
:=
\Bigl|
2^{-n}\!\sum_x p(x)\,\chi_S(x)
\Bigr|^2
=
\bigl|
2^{-n}\mathbb{E}_p[\chi_S]
\bigr|^2.
\]
Hence
\[
\bigl(\mathbb{E}_p[\chi_S]\bigr)^2
= 4^n P(S).
\]
Let $c_{\min}:=\min_{S} c_S>0$ (positivity holds for standard
character expansions of positive kernels). Then the irreducible term
in Eq.~(\ref{eq:mmd-irreducible-trainable}) obeys
\begin{align}
\sum_{S\notin S_A}
c_S\bigl(\mathbb{E}_p[\chi_S]\bigr)^2
&\ge
c_{\min}\sum_{S\notin S_A}
\bigl(\mathbb{E}_p[\chi_S]\bigr)^2
\\[2pt]
&=
c_{\min} 4^n
\sum_{S\notin S_A} P(S)
=: C_0 \sum_{S\notin S_A} P(S),
\end{align}
where we define $C_0 := c_{\min} 4^n>0$.

Since the trainable part in Eq.~(\ref{eq:mmd-irreducible-trainable}) is
nonnegative, this yields a lower bound on the \emph{minimal} MMD over
all IQP parameters:
\begin{equation}
\label{eq:irreducible-lowerbound}
\min_{\theta}
\mathcal{L}_{\text{MMD}}(p,q_\theta)
\;\ge\;
C_0
\sum_{S\notin S_A} P(S).
\end{equation}
{This proves Proposition~\ref{prop:walsh-support-mmd-floor}, with $C_A=C_0$.}
\medskip

{\noindent\textbf{Consequence under the QCLI--support alignment assumption.}}
By Assumption~\ref{assump:qcli-support}, the Fourier mass
\emph{outside} the IQP-tunable support $S_A$ is controlled by the
QCLI:
\[
\sum_{S\notin S_A} P(S)
\;\ge\;
\kappa\,\bigl(1 - I_{\text{QCLI}}(p)\bigr),
\]
with some $\kappa>0$ depending only on the architecture.

Combining this with Eq.~(\ref{eq:irreducible-lowerbound}) gives
\[
\min_{\theta}
\mathcal{L}_{\text{MMD}}(p,q_\theta)
\;\ge\;
C_0\kappa\,\bigl(1 - I_{\text{QCLI}}(p)\bigr).
\]
Defining $C' := C_0\kappa>0$ yields exactly the claimed bound
Eq.~(\ref{eq:qcli-mmd-lowerbound}):
\[
\min_{q\in\mathcal{F}_{\mathrm{IQP}}}
\mathcal{L}_{\text{MMD}}(p,q)
\;\ge\;
C'\,\bigl(1 - I_{\text{QCLI}}(p)\bigr).
\]

\medskip
\noindent\textbf{Interpretation.}
The key message is that \emph{even after perfect training on all frequencies that the IQP architecture can represent}, the MMD loss cannot go below a floor proportional to the spectral power outside $S_A$. {Under Assumption~\ref{assump:qcli-support}, this spectral leakage is tied to $1-I_{\mathrm{QCLI}}(p)$, so larger $I_{\mathrm{QCLI}}$ corresponds to a smaller support-induced MMD floor for the fixed IQP architecture.}

This completes the proof.
\end{proof}

\paragraph{Remark (Scope of the bound).}
Assumption~\ref{assump:qcli-support} is meant for \emph{underparameterized}
(order-$d$) IQP architectures in which the tunable Walsh--Fourier support
satisfies $S_A \subsetneq 2^{[n]}$.  
In this regime, the Fourier mass on $S\notin S_A$ quantifies the portion of
the data spectrum that the architecture cannot represent, and the constant
$\kappa$ in Eq.~(\ref{eq:assumption-qcli-support}) implicitly absorbs both  
(i) geometric slack in relating $D_{\mathrm{JS}}(m\Vert b)$ to the fraction of
power on $S_A$, and  
(ii) any \emph{architecture–intrinsic} approximation error arising from the
non-universality of an IQP model whose support is truncated.  
Thus, even when $S_A\neq 2^{[n]}$, {the constant $\kappa$ should be understood as part of the alignment assumption, absorbing slack between the scalar order-spectrum diagnostic and the
architecture-specific Walsh support.}

By contrast, in the hypothetical full-support case $S_A = 2^{[n]}$, one has
$\sum_{S\notin S_A} P(S)=0$, and the lower bound of
{Proposition~\ref{prop:walsh-support-mmd-floor}} becomes vacuous.  
This does \emph{not} imply perfect representability by IQP circuits:
it simply means that the present QCLI-based argument captures only the
irreducible error due to missing Walsh parities, and does not address the
additional approximation gap that may persist even when the parity support is
formally complete.  
Practically, our implementations always operate in the truncated regime
$S_A\subsetneq 2^{[n]}$, since including all $2^n$ parities is infeasible in
both parameter count and training (this point is also illustrated in \cite{recio2025train}); in this realistic setting, the
QCLI--support alignment assumption provides a meaningful and structurally
interpretable lower bound.

\section{Empirical Estimation of QCLI}
\label{app:empirical-qcli}

\paragraph{The $2^n$ barrier.}
In Sec.~\ref{sec:codhoc} we define the (normalized) Walsh--Hadamard spectral power of a distribution $p$ on $\{0,1\}^n$ by
\begin{equation}
P(s)
=
\Bigl|2^{-n}\!\sum_x p(x)(-1)^{\oplus_{i\in s}x_i}\Bigr|^2
\in[0,1]
\qquad s\subseteq[n],
\tag{4}\label{eq:P_def_app}
\end{equation}
and then aggregate this power by correlation order $|s|=k$ to obtain the order-share vector
\begin{equation}
m_k
:=\frac{\sum_{|s|=k} P(s)}{\sum_{s\subseteq[n]} P(s)},
\qquad k=0,1,\dots,n,
\tag{5}\label{eq:mk_def_app}
\end{equation}
so that $\sum_{k=0}^{n} m_k = 1$. Computing Eq.~(\ref{eq:mk_def_app}) \emph{exactly} requires summing over all subsets $s\subseteq[n]$, i.e., $2^n$ terms (and $\binom{n}{k}$ terms per order), which becomes intractable beyond small $n$. For this reason, our implementation estimates $m_k$ and hence $I_{\text{QCLI}}$ using Monte Carlo sampling over subsets.

\paragraph{Monte Carlo approximation of the order shares $m_k$.}
Let $L$ be the total subset budget, and let $m_k^{\mathrm{MC}}$ be the number of subsets sampled at order $k$ (so that $\sum_{k=0}^{n} m_k^{\mathrm{MC}} = L$). We adopt an \emph{equal-per-order} allocation, i.e., $m_k^{\mathrm{MC}}$ is approximately constant across $k$, to ensure that low- and high-order components are not missed due to the combinatorial imbalance of $\binom{n}{k}$.

For each order $k$, we sample $m_k^{\mathrm{MC}}$ subsets uniformly without replacement from the $\binom{n}{k}$ subsets of size $k$:
\[
s^{(k)}_{1},\dots,s^{(k)}_{m_k^{\mathrm{MC}}}\ \sim\ \mathrm{Unif}\bigl(\{s\subseteq[n]:|s|=k\}\bigr).
\]
To emulate the full sums in Eq.~(\ref{eq:mk_def_app}), we use an importance reweighting factor $\binom{n}{k}/m_k^{\mathrm{MC}}$ within each order. Define the reweighted power estimators
\begin{equation}
\widehat{A}_k
:= \frac{\binom{n}{k}}{m_k^{\mathrm{MC}}}\sum_{j=1}^{m_k^{\mathrm{MC}}} \widehat{P}\!\bigl(s^{(k)}_{j}\bigr),
\qquad
\widehat{A}
:= \sum_{k=0}^{n}\widehat{A}_k,
\label{eq:Ak_A}
\end{equation}
which are Monte Carlo estimators of $\sum_{|s|=k}P(s)$ and $\sum_{s}P(s)$, respectively. We then estimate the order shares by
\begin{equation}
\widehat{m}_k := \frac{\widehat{A}_k}{\widehat{A}},
\qquad k=0,1,\dots,n.
\label{eq:mk_hat}
\end{equation}
This estimator matches the logic of Eq.~(\ref{eq:mk_def_app}) but reduces the computational cost from $\mathcal{O}(2^n)$ subset evaluations to $\mathcal{O}(L)$ subset evaluations.

\paragraph{Computing $I_{\text{QCLI}}$ via JS divergence.}
Given $\widehat{m}=(\widehat{m}_0,\dots,\widehat{m}_n)$ from Eq.~(\ref{eq:mk_hat}), we compute $I_{\text{QCLI}}$ as the Jensen--Shannon divergence to the binomial baseline $b_k=\binom{n}{k}/2^n$ (Sec.~\ref{sec:codhoc}). In implementation we apply a small $\varepsilon$-clipping and renormalization to avoid numerical issues when some $\widehat{m}_k$ are zero under finite sampling. With $M_{\mathrm{mix}}=\tfrac{1}{2}(\widehat{m}+b)$, we report
\begin{equation}
I_{\text{QCLI}}
=
D_{\mathrm{JS}}(\widehat{m}\,\|\,b)
=
\frac{1}{2}\sum_{k=0}^{n}\widehat{m}_k \log_2\!\frac{\widehat{m}_k}{(M_{\mathrm{mix}})_k}
+
\frac{1}{2}\sum_{k=0}^{n}b_k \log_2\!\frac{b_k}{(M_{\mathrm{mix}})_k},
\label{eq:qcli_js_app}
\end{equation}
so that $I_{\text{QCLI}}\in[0,1]$ under normalization.

\paragraph{Implementation details and regimes.}
For small $n$ (up to a user-specified threshold), we compute $\widehat{m}$ \emph{exactly} by enumerating all subsets $s\subseteq[n]$ and evaluating Eq.~(\ref{eq:ps}) and Eq.~(\ref{eq:mk}) directly. For larger $n$, we use the Monte Carlo estimator Eq.~(\ref{eq:mk_hat}) with a default subset budget (e.g., $L=2\times10^4$). Subset evaluation is batched by order to vectorize parity computation across many subsets simultaneously; for $n\le 64$ we optionally represent subsets as bitmasks for speed, while for general $n$ we store subsets as sorted index lists.

\section{Estimating upper and lower envelopes.}
\label{app:envelopes}
To visualize the regime-wise trends in Fig.~\ref{fig:qcli_mmd}, we construct empirical \emph{upper} and \emph{lower} envelopes of the observed MMD values as functions of $I_{\text{QCLI}}$. We first pool all experimental runs into a single set of points $\{(x_i,y_i)\}_{i=1}^N$, where $x_i$ is the measured $I_{\text{QCLI}}$ of the target dataset and $y_i$ is the corresponding achieved MMD loss under the fixed training budget (with repeated trials summarized by the mean; error bars indicate the across-trial standard deviation).

We then partition the $I_{\text{QCLI}}$ axis into uniform bins of width $\Delta=0.09$. For each bin $B_m=[a_m,a_m+\Delta)$, we identify (if nonempty) the \emph{upper-envelope} representative as the point in that bin with the largest loss,
\[
(x_m^{\uparrow},y_m^{\uparrow})
\;:=\;
\arg\max_{(x_i,y_i):\,x_i\in B_m} \; y_i,
\]
and analogously the \emph{lower-envelope} representative as the point with the smallest loss,
\[
(x_m^{\downarrow},y_m^{\downarrow})
\;:=\;
\arg\min_{(x_i,y_i):\,x_i\in B_m} \; y_i.
\]
Importantly, the envelope elements $x_m^{\uparrow}$ and $x_m^{\downarrow}$ are taken to be the \emph{original} $I_{\text{QCLI}}$ values of the selected points (rather than bin centers), so that the envelope traces remain anchored to observed data.

Finally, to obtain smooth envelope curves, we fit a univariate smoothing spline to each envelope point set using \texttt{scipy.interpolate.UnivariateSpline}. For the upper envelope we use a quadratic spline ($k=2$) with a small smoothing factor, while for the lower envelope we use a stronger smoothing to suppress bin-to-bin fluctuations. The resulting spline functions are plotted as the dashed upper and lower envelope curves in Fig.~\ref{fig:qcli_mmd}.

\section{Constructing the empirical IQP envelope for Correlation-Complexity Map.}
\label{app:envelope_CCM}
To provide a visual reference for the region in the $(I_{\text{QCLI}}, I_{\text{CCI}})$ plane that is naturally occupied by IQP outputs under our optimization protocol, we construct an empirical \emph{IQP envelope} (gray boundary and shaded area in Fig.~\ref{fig:ccm}) from the cloud of IQP-optimized samples collected in Sec.~\ref{sec:high_qcli_high_cci}. Let $\{(x_i,y_i)\}_{i=1}^N$ denote these points, where $x_i=I_{\text{QCLI}}$ and $y_i=I_{\text{CCI}}$ for the $i$-th optimized IQP run.

We estimate the lower frontier of this cloud as a function of $I_{\text{QCLI}}$ via a binwise minimum procedure. Specifically, we partition the QCLI range $[x_{\min},x_{\max}]$ into $B$ uniform bins and, in each bin that contains at least $m_{\min}$ points, we record the minimum observed CCI value:
\[
\tilde y_b \;:=\; \min\{\,y_i:\; x_i\in \mathcal{B}_b\,\},
\qquad 
\tilde x_b \;:=\; \text{center}(\mathcal{B}_b),
\]
where $\mathcal{B}_b$ denotes the $b$-th QCLI bin. This yields a discrete set of frontier anchors $\{(\tilde x_b,\tilde y_b)\}$ that approximates the \emph{binwise lower envelope} of the IQP-optimized cloud.

To obtain a continuous boundary, we extend the frontier to the full horizontal span by adding the endpoints $x_{\min}$ and $x_{\max}$ with boundary values set to the nearest available frontier values (piecewise-constant extension at the edges), and then apply a centered moving-average smoothing to suppress bin-to-bin fluctuations. The resulting smoothed curve $y_{\mathrm{env}}(x)$ is plotted as the ``IQP boundary'' in Fig.~\ref{fig:ccm}, and the shaded region corresponds to $\{(x,y): y \ge y_{\mathrm{env}}(x)\}$, i.e., the portion of the plane above this empirical lower frontier that contains the bulk of IQP-optimized samples.

\section{Float-to-Bitstring Representation of Turbulence Data}
\label{app:float2bit}

To interface continuous turbulence fields with IQP-based generative models, we first convert each floating-point coordinate into a fixed-length binary representation. With a similar idea on encoding in \citep{buhrman2001quantum}. Consider a turbulence dataset consisting of $M$ samples with coordinates $(x,y,z) \in \mathbb{R}^3$, stored as an array of shape $(M,3)$. Here, $x$ and $y$ denote 2D spatial positions, while $z$ corresponds to the local velocity magnitude. Each coordinate lies within a known bounded interval $[a,b]$, determined empirically from the dataset. Using $N$ qubits per coordinate, we discretize this interval into a uniform grid of resolution
\begin{equation}
    \Delta = \frac{b - a}{2^N},
\end{equation}
so that any value $v \in [a,b]$ is mapped to one of $2^N$ discrete bins.

Formally, the quantization map is
\begin{equation}
k(v) := \left\lfloor 2^N\frac{v - a}{b - a} \right\rfloor + 1,
\qquad
k(v) \in \{1,2,\ldots,2^N\}, 
\end{equation}
which assigns to each floating-point value a unique integer index. We then convert this index into its standard $N$-bit binary encoding:
\begin{equation}
    b(v) := \text{BinaryEncode}_{N}\!\big(k(v)\big) \in \{0,1\}^N.
\end{equation}
This step is purely classical: it is the usual integer-to-binary conversion, ensuring a one-to-one correspondence between quantized values and bitstrings of length $N$. For example, when using $N$ qubits, the smallest index corresponds to the bitstring of all zeros $00\cdots0$, and the largest index corresponds to the bitstring of all ones $11\cdots1$.

Applying this process independently to all three coordinates produces a concatenated bitstring
\begin{equation}
(x,y,z)
\ \longmapsto\ 
\big[\, b(x)\,\|\, b(y)\,\|\, b(z)\,\big] \in \{0,1\}^{3N},
\end{equation}
so that the full turbulence dataset becomes a binary matrix of shape $(M,3N)$. This binarized representation enables turbulence data to be used directly within the IQP generative modeling framework: the binary samples serve as training data for the classical optimization stage, and generated bitstrings can be deterministically decoded back into continuous coordinates by reversing the same quantization procedure. In our turbulence experiments, we choose $N=6$ bits per coordinate, so each $(x,y,z)$ tuple is encoded as an 18-bit vector. Accordingly, each turbulence sample is modeled by an IQP circuit acting on 18 qubits.

\section{Train on Classical, Deploy on Quantum Framework}
\label{app:tocdoq-details}

In order to leverage the expressive power of parameterized quantum circuits while avoiding the prohibitive cost and possible pitfalls of quantum-hardware training, we adopt a hybrid training scheme in which model parameters are optimized on classical hardware, and sampling is performed later on quantum hardware, built on the strategy recently proposed in \cite{recio2025train}, using IQP circuits as our generative ansatz. This paradigm, ``train on classical, deploy on quantum'' enables scalable classical optimization even for models involving hundreds or thousands of qubits, sidestepping the gradient-estimation and barren-plateau issues that often plague purely quantum training of deep variational circuits. 

\paragraph{MMD loss expressed in terms of IQP expectation values.}
Let $p_\theta$ denote the bitstring distribution induced by an IQP circuit with parameters $\theta$, and let $\hat p_{\mathrm{data}}$ be the empirical distribution from the binarized turbulence dataset. We train the model by minimizing a MMD loss:
\begin{equation}
\mathcal{L}_{\text{MMD}}(\theta)
=
\mathbb{E}_{b,b' \sim p_\theta}\!\big[k(b,b')\big]
-
2\,\mathbb{E}_{b\sim p_\theta,\; b'\sim \hat p_{\mathrm{data}}}\!\big[k(b,b')\big]
+
\mathbb{E}_{b,b'\sim \hat p_{\mathrm{data}}}\!\big[k(b,b')\big].
\label{eq:mmd-raw}
\end{equation}
Here, $k(b,b')$ is a Gaussian kernel, which measures how close two bitstrings $b$ and $b'$ are in the induced feature space. The MMD loss is given by
Following the derivations in Appendix A.2.1, A.2.2, and A.3 of \cite{recio2025train}, each term in Eq.~(\ref{eq:mmd-raw}) can be rewritten in closed form using expectation values of Pauli operators under the IQP circuit. In particular, the entire MMD loss can be expressed as
\begin{equation}
\mathcal{L}_{\text{MMD}}(\theta)
=
\sum_{s \subseteq [n]} c_s
\Big(
\langle Z_s \rangle_\theta
-
\langle Z_s \rangle_{\mathrm{data}}
\Big)^2,
\label{eq:mmd-pauli}
\end{equation}
where $Z_s := \bigotimes_{i\in s} Z_i$ denotes a multi-qubit Pauli operator, and $c_s$ are precomputable coefficients determined by the chosen kernel.  
Because IQP circuits admit efficient classical evaluation of $\langle Z_s \rangle_\theta$ for any subset $s$ (equation 15 of \cite{recio2025train}), the loss Eq.~(\ref{eq:mmd-pauli} can be computed entirely on classical hardware, the entire training loop therefore runs on classical hardware, pursuing:
\[
\theta^*
=
\arg\min_\theta \mathcal{L}_{\text{MMD}}(\theta),
\]
without requiring access to the quantum device.

\paragraph{Quantum deployment and sampling.}
With the trained parameters $\theta^*$ loaded onto a quantum device, we execute the IQP circuit and perform projective measurements in the computational basis to sample bitstrings $x\in\{0,1\}^n$ from the learned distribution $q_{\theta^*}(x)$. These bitstrings are then decoded into turbulence fields via the inverse of the float-to-bitstring mapping defined in the previous subsection, producing samples in the original physical domain. Since (suitably scaled) IQP output distributions are widely conjectured to be classically hard to sample from, this deployment stage provides a natural route toward quantum sampling utility. Moreover, because our float-to-bitstring encoding is efficiently invertible, the induced distribution over decoded floating-point turbulence fields inherits the sampling hardness of the underlying bitstring distribution by a standard reduction: an efficient classical sampler for the decoded outputs would yield an efficient classical sampler for the IQP bitstrings. We formalize this hardness-preservation statement in Appendix~\ref{app:hardness-preservation}.

\paragraph{Remarks on scalability and trainability.} The main benefit of this framework is its scalability: by avoiding quantum-hardware training, one can optimize very large circuits (e.g., up to a thousand qubits, as demonstrated in \cite{recio2025train}) using classical resources. The data-dependent parameter initialization strategy proposed in that work helps avoid barren-plateau regions and ensures stable convergence during classical optimization.

\section{Hardness Preservation Under Efficient Invertible Decoding}
\label{app:hardness-preservation}

This appendix formalizes a simple but useful observation: composing a classically hard-to-sample bitstring distribution with an efficiently invertible decoding map yields an induced distribution in a physically meaningful (e.g., floating-point) domain that is \emph{at least as hard} to sample as the original distribution, in the sense of a black-box sampling reduction.

\begin{theorem}[Hardness preservation under efficient invertible maps]
\label{thm:hardness-preservation}
Let $q$ be a probability distribution over $\{0,1\}^n$ (e.g., the output distribution of an IQP circuit at fixed parameters). Let $\mathcal{T}:\{0,1\}^n \to \mathcal{Y}$ be a mapping into some domain $\mathcal{Y}$ such that both $\mathcal{T}$ and its inverse $\mathcal{T}^{-1}$ are computable in classical polynomial time.\footnote{In our setting, $\mathcal{T}$ is the decoding map from bitstrings to quantized floating-point fields induced by the float-to-bitstring construction.}
Define the induced distribution $q' := \mathcal{T}(q)$ on $\mathcal{Y}$ by sampling $x\sim q$ and outputting $y=\mathcal{T}(x)$.

If there exists a classical polynomial-time sampler $\mathcal{S}'$ that samples exactly from $q'$, then there exists a classical polynomial-time sampler $\mathcal{S}$ that samples exactly from $q$.
Consequently, any classical sampling hardness assumption for $q$ transfers to $q'$ via reduction.
\end{theorem}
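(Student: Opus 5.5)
The plan is a direct black-box reduction: build $\mathcal{S}$ by running $\mathcal{S}'$ and mapping its output back through $\mathcal{T}^{-1}$. Concretely, $\mathcal{S}$ invokes $\mathcal{S}'$ once to obtain $y\sim q'$, computes $x=\mathcal{T}^{-1}(y)$, and outputs $x$. The proof then has two parts: show that the output of $\mathcal{S}$ is distributed exactly as $q$, and show that $\mathcal{S}$ runs in polynomial time.

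\textbf{Correctness.} I first make explicit that "$\mathcal{T}^{-1}$ computable" means $\mathcal{T}$ is injective, so that $\mathcal{T}^{-1}\circ\mathcal{T}=\mathrm{id}$ on $\{0,1\}^n$. Under this reading $\mathcal{T}^{-1}$ only needs to be defined on the image $\mathcal{T}(\{0,1\}^n)$. This is the case for the decoding map of Appendix~\ref{app:float2bit}, which is a bijection between bitstrings and quantized grid values. Since $q'$ is the pushforward of $q$ under $\mathcal{T}$, its support lies in the image of $\mathcal{T}$, so $\mathcal{T}^{-1}(y)$ is always well defined. For any $x\in\{0,1\}^n$, injectivity gives
\[
\Pr[\mathcal{S}\text{ outputs }x]=\Pr_{y\sim q'}[\mathcal{T}^{-1}(y)=x]=q'(\mathcal{T}(x))=\sum_{x':\,\mathcal{T}(x')=\mathcal{T}(x)}q(x')=q(x).
\]
Hence $\mathcal{S}$ samples exactly from $q$.

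\textbf{Running time.} The runtime of $\mathcal{S}$ is the runtime of $\mathcal{S}'$ plus one evaluation of $\mathcal{T}^{-1}$ on an input of size $|y|$. Both are polynomial, and $|y|$ is polynomially bounded because $\mathcal{S}'$ has only polynomial time to write its output. Their composition is therefore polynomial. The hardness transfer then follows by contraposition: if $q$ admits no efficient exact classical sampler, for instance under the PH non-collapse argument of \citep{bremner2011classical}, then neither does $q'$.

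The main obstacle is not technical but a matter of hypotheses. If $\mathcal{T}$ were not injective, the sum above would pick up collisions and the reduction would fail. So I will state injectivity (on the support of $q$) explicitly as the meaning of "inverse". I will also remark that the same argument carries over to approximate sampling: because $\mathcal{T}^{-1}$ is a deterministic function, the data-processing inequality gives $\mathrm{TV}(\mathcal{T}^{-1}(\tilde q'),q)\le\mathrm{TV}(\tilde q',q')$. This is the regime in which IQP hardness conjectures are usually stated, so the reduction transfers those conjectures as well.
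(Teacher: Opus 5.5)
Your proposal is correct and is the same reduction the paper uses: run $\mathcal{S}'$, output $\mathcal{T}^{-1}(y)$, and verify $\Pr[\mathcal{S}=x]=q'(\mathcal{T}(x))=q(x)$ from the pushforward definition together with injectivity of $\mathcal{T}$. You go slightly beyond the paper by spelling out injectivity, the bound on the output length, and the approximate-sampling extension; for that last point, $\mathcal{T}^{-1}$ must first be extended to a total, efficiently computable map on $\mathcal{Y}$, because an approximate sampler may output points outside $\mathcal{T}(\{0,1\}^n)$.
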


\begin{proof}
Assume a classical polynomial-time algorithm $\mathcal{S}'$ that outputs $y\sim q'$. Consider the following sampler for $q$:
\[
\mathcal{S}:\quad
\text{run } y\leftarrow \mathcal{S}',\ \text{output } x := \mathcal{T}^{-1}(y).
\]
Since $\mathcal{T}^{-1}$ is efficient by assumption, $\mathcal{S}$ runs in classical polynomial time.

It remains to verify correctness. For any $x\in\{0,1\}^n$,
\begin{align*}
\Pr[\mathcal{S}=x]
&= \Pr\big[\mathcal{T}^{-1}(y)=x\big]
= \Pr\big[y=\mathcal{T}(x)\big]
= q'\big(\mathcal{T}(x)\big)
= q(x),
\end{align*}
where the last equality follows from the definition of the pushforward distribution $q'=\mathcal{T}(q)$ and the bijectivity of $\mathcal{T}$. Hence $\mathcal{S}$ samples exactly from $q$, completing the reduction.
\end{proof}

\paragraph{Implication for decoded turbulence fields.}
Theorem~\ref{thm:hardness-preservation} shows that our decoded floating-point outputs are not merely a post-processing artifact: because the float-to-bitstring map is invertible and efficiently decodable, an efficient classical sampler for the decoded turbulence fields would imply an efficient classical sampler for the underlying IQP bitstrings. Therefore, under standard IQP sampling hardness conjectures, the induced distribution over decoded turbulence fields inherits the corresponding classical sampling hardness. Importantly, this statement is \emph{conditional} and does not claim that the turbulence domain is intrinsically hard {to sample classically, nor does it rule out alternative classical samplers for the learned distribution.}; rather, it shows that hardness is preserved under our invertible representation.

\section{Evaluation metrics}
\label{app:evaluation_metrics}
Let $\{X_i\}_{i=1}^{N}$ denote a set of ground truth (real) turbulence snapshots and $\{Y_j\}_{j=1}^{M}$ denote samples generated by a model, where each snapshot is a 2D field (in our implementation we evaluate the $Z$-channel only). We report two complementary distributional distances.

\textbf{(i) PDF--JS divergence (marginal intensity distribution).}
We flatten all pixel values from the selected channel into one-dimensional samples
$\{z^{\mathrm{real}}_\ell\}_{\ell=1}^{L_r}$ and $\{z^{\mathrm{gen}}_\ell\}_{\ell=1}^{L_g}$.
Fix a histogram range $[a,b]$ chosen from the joint min/max of the two sets and a number of bins $B$ (we use $B=50$). Let
\begin{equation}
\hat p_k \;=\; \frac{\#\{\ell:\ z^{\mathrm{real}}_\ell \in \mathcal{I}_k\}}{\sum_{k'=1}^{B}\#\{\ell:\ z^{\mathrm{real}}_\ell \in \mathcal{I}_{k'}\}},
\qquad
\hat q_k \;=\; \frac{\#\{\ell:\ z^{\mathrm{gen}}_\ell \in \mathcal{I}_k\}}{\sum_{k'=1}^{B}\#\{\ell:\ z^{\mathrm{gen}}_\ell \in \mathcal{I}_{k'}\}},
\end{equation}
be the normalized histogram probabilities over bins $\{\mathcal{I}_k\}_{k=1}^{B}$ (we add a small $\varepsilon$ to avoid zero entries and renormalize). The PDF--JS divergence is then
\begin{equation}
D_{\mathrm{JS}}(\hat p \,\|\, \hat q)
\;:=\;
\frac{1}{2} D_{\mathrm{KL}}(\hat p \,\|\, m)
+\frac{1}{2} D_{\mathrm{KL}}(\hat q \,\|\, m),
\qquad
m:=\frac{1}{2}(\hat p+\hat q),
\label{eq:pdf_js}
\end{equation}
This score measures mismatch of the \emph{marginal} pixel-intensity distribution (lower is better).

\textbf{(ii) Feature-space $\text{MMD}$ with a random Conv2D encoder.}
To compare higher-order spatial statistics beyond marginals, we embed each snapshot into a feature space using a fixed, randomly initialized convolutional network $\phi(\cdot)$ (three Conv2D--ReLU blocks with stride 2 followed by global average pooling). We then compute MMD in feature space using a Gaussian kernel. Let $f_i=\phi(X_i)\in\mathbb{R}^d$ and $g_j=\phi(Y_j)\in\mathbb{R}^d$, and define
\begin{equation}
k_\sigma(u,v)
:=\exp\!\Bigl(-\frac{\|u-v\|_2^2}{2\sigma^2}\Bigr),
\end{equation}
where $\sigma$ is chosen by the median heuristic on pairwise feature distances over the pooled set $\{f_i\}\cup\{g_j\}$.
The (biased) empirical MMD\(^2\) estimator used in our implementation is
\begin{equation}
\widehat{\text{MMD}}_{\text{Conv2D}}
=
\frac{1}{N^2}\sum_{i,i'} k_\sigma(f_i,f_{i'})
+
\frac{1}{M^2}\sum_{j,j'} k_\sigma(g_j,g_{j'})
-
\frac{2}{NM}\sum_{i,j} k_\sigma(f_i,g_j).
\label{eq:feature_mmd2}
\end{equation}
In practice, we evaluate Eq.~(\ref{eq:feature_mmd2}) using $N=M=200$ samples (subsampled without replacement) and report lower values as better alignment.
The PDF--JS probes marginal amplitude statistics, while feature-space $\text{MMD}$ probes spatially aggregated structure captured by random convolutional features. Without using feature space settings, it is possible that the score is very low while the generated image looks very different but with similar distribution of occurred values/colors. 

\color{black}
\section{Reproducibility Summary}
\label{app:reproducibility}

This appendix summarizes the main experimental settings used throughout the paper. The goal is to make the data representations, model configurations, training protocols, and evaluation metrics explicit in one place.

\paragraph{Controlled QCLI--MMD support-mismatch probe.}
Table~\ref{tab:repro-qcli-mmd} summarizes the target generation, restricted learner architecture, training protocol, evaluation metric, and envelope estimation procedure used for the controlled support-mismatch experiment in Fig.~\ref{fig:qcli_mmd}.

\begin{table}[h]
\centering
\caption{Settings for the controlled QCLI--MMD support-mismatch probe.}
\label{tab:repro-qcli-mmd}
\renewcommand{\arraystretch}{1.15}
\begin{tabular}{ll}
\hline
\textbf{Item} & \textbf{Setting} \\
\hline
Target datasets &
$10^4$ samples from $n=16$ IQP circuits \\
Data-generating circuits &
Locality $\leq 4$, 
$G_n^{\mathrm{data}}\in\{140,280,420,560,700,840,1050\}$ \\
Target diversity &
SPSA-optimized parameters spanning a range of $I_{\mathrm{QCLI}}$ \\
Learner circuits &
Restricted IQP, locality $\leq 2$, $G_n\in\{50,100,150\}$ \\
Training &
Train-on-classical/deploy-on-quantum; Adam, $5000$ iterations, lr $10^{-4}$ \\
Evaluation &
Achieved MMD loss after training \\
Envelope estimation &
Binning-and-spline procedure, Appendix~\ref{app:envelopes} \\
\hline
\end{tabular}
\end{table}

\paragraph{High-$I_{\mathrm{QCLI}}$ IQP outputs and $I_{\mathrm{CCI}}$.}
Table~\ref{tab:repro-qcli-cci-iqp} summarizes the circuit sizes, architecture,
optimization objective, sampling procedure, and reported indicators for the
high-$I_{\mathrm{QCLI}}$ IQP output experiment in Fig.~\ref{fig:high_qcli}.

\begin{table}[h]
\centering
\caption{Settings for the high-$I_{\mathrm{QCLI}}$ IQP output and $I_{\mathrm{CCI}}$ experiment.}
\label{tab:repro-qcli-cci-iqp}
\renewcommand{\arraystretch}{1.15}
\begin{tabular}{ll}
\hline
\textbf{Item} & \textbf{Setting} \\
\hline
System sizes &
$n\in\{8,12,16\}$ qubits \\
Circuit architecture &
IQP circuits with $150$ randomly selected 2-local parameterized gates \\
Initialization and optimizer &
Random initialization; SPSA optimization \\
Objective &
Maximize $I_{\mathrm{QCLI}}$ only; no $I_{\mathrm{CCI}}$ term used \\
Sampling along optimization &
$20000$ IQP circuits per system size \\
Evaluation &
Compute $I_{\mathrm{QCLI}}$ and $I_{\mathrm{CCI}}$ for each sampled circuit \\
Associated figure &
Fig.~\ref{fig:high_qcli} \\
\hline
\end{tabular}
\end{table}

\paragraph{Turbulence dataset and representation.}
Table~\ref{tab:repro-turbulence-representation} summarizes the turbulence dataset,
tuple-level bitstring encoding, generated object, and snapshot reconstruction
procedure used in the IQP generative modeling experiments.

\begin{table}[h]
\centering
\caption{Turbulence dataset and representation used in the IQP generative modeling experiments.}
\label{tab:repro-turbulence-representation}
\renewcommand{\arraystretch}{1.15}
\begin{tabular}{ll}
\hline
\textbf{Item} & \textbf{Setting} \\
\hline
Dataset & D2 turbulence dataset from Ref.~\cite{khojasteh2022lagrangian} \\
Number of snapshots & $1000$ \\
Time indices & $t\in\{1,2,\ldots,1000\}$ \\
Basic data unit & Coordinate/value tuple $(x,y,z)\in\mathbb{R}^3$ \\
Quantization & $N=6$ bits per coordinate/value component \\
Bitstring length & $3N=18$ bits per tuple \\
Generated object & Distribution over quantized coordinate/value tuples \\
Snapshot reconstruction & Sample many tuples and aggregate them on the discretized grid \\
\hline
\end{tabular}
\end{table}

\paragraph{IQP latent adaptation on turbulence.}
Table~\ref{tab:repro-iqp-latent-adaptation} summarizes the anchor snapshots,
parameter partition, optimization settings, latent interpolation procedure, and
sampling protocol used for IQP latent adaptation on turbulence.

\begin{table}[h]
\centering
\caption{IQP latent-adaptation settings for turbulence generation.}
\label{tab:repro-iqp-latent-adaptation}
\renewcommand{\arraystretch}{1.15}
\begin{tabular}{ll}
\hline
\textbf{Item} & \textbf{Setting} \\
\hline
Training snapshots & $11$ anchor snapshots \\
Anchor time indices &
$t\in\{1,100,200,300,400,500,600,700,800,900,1000\}$ \\
Parameter partition &
Shared core block $\theta_{\mathrm{core}}$ and latent block $\theta_{\mathrm{lat}}$ \\
Latent dimension &
$d_{\mathrm{lat}}=50$ \\
First snapshot training &
$\theta_{\mathrm{lat}}^{(1)}$ is randomly initialized and fixed; $\theta_{\mathrm{core}}$ is trained by minimizing MMD \\
First-stage training data &
$N_t=5\times10^4$ training datapoints \\
Model samples per MMD estimate &
$500$ \\
First-stage optimization steps &
$3\times10^4$ \\
Later snapshot adaptation &
Freeze $\theta_{\mathrm{core}}$ and adapt only $\theta_{\mathrm{lat}}$ \\
Latent adaptation optimization steps &
$3000$ per adapted snapshot \\
Latent initialization for later snapshots &
Initialize $\theta_{\mathrm{lat}}$ from the previous snapshot \\
Unseen-time synthesis &
Interpolate the learned latent trajectory, e.g., piecewise-linear interpolation \\
Samples per generated time &
Approximately $10^5$--$10^6$ samples \\
Output reconstruction &
Decode and aggregate samples into a synthetic turbulence snapshot \\
\hline
\end{tabular}
\end{table}
\paragraph{Classical baselines.}
Tables~\ref{tab:repro-rbm-baseline} and~\ref{tab:repro-dcgan-baselines}
summarize the RBM and DCGAN baseline settings used for comparison with the IQP
turbulence generator.

\begin{table}[h]
\centering
\caption{RBM baseline settings for the turbulence generative modeling experiments.}
\label{tab:repro-rbm-baseline}
\renewcommand{\arraystretch}{1.15}
\begin{tabular}{ll}
\hline
\textbf{Item} & \textbf{Setting} \\
\hline
Training snapshots &
Same $11$ training snapshots as the IQP model \\
Data representation &
Same $18$-bit coordinate/value tuple representation as the IQP model \\
Training domain &
Bitstring domain \\
Latent dimension &
$d_{\mathrm{lat}}=50$ \\
Latent block &
First $d_{\mathrm{lat}}=50$ entries of a fixed flattened RBM parameter ordering \\
Fixed parameters &
Remaining RBM parameters are held fixed analogously to the IQP case \\
\hline
\end{tabular}
\end{table}

\begin{table}[h]
\centering
\caption{DCGAN baseline settings for the turbulence generative modeling experiments.}
\label{tab:repro-dcgan-baselines}
\renewcommand{\arraystretch}{1.15}
\begin{tabular}{ll}
\hline
\textbf{Item} & \textbf{Setting} \\
\hline
Data representation &
Quantized two-dimensional turbulence fields, not the bitstring representation \\
Data-matched regime &
$11$ training snapshots, matching IQP/RBM \\
Data-rich regime &
$100$ training snapshots \\
Model sizes &
Approximately $51$k, $0.2$M, and $1.2$M parameters \\
Latent dimension &
$128$ \\
Data-scaling study &
$N_{\mathrm{train}}\in\{10,30,50,70,90,100\}$ snapshots \\
Evaluation protocol &
Same held-out protocol as the other turbulence experiments \\
\hline
\end{tabular}
\end{table}

\end{document}